\documentclass[twoside]{article}
\usepackage[preprint]{aistats2027}
\usepackage{amsmath,amssymb,amsfonts,amsthm,bm}
\usepackage{booktabs}
\usepackage{multirow}
\usepackage{graphicx}
\usepackage{xcolor}
\usepackage[numbers,square,sort&compress]{natbib}

\usepackage{algorithm}
\usepackage{algpseudocode}
\usepackage{xurl}
\usepackage{hyperref}
\makeatletter\let\ifpreprintversion\if@preprint\makeatother
\algrenewcommand\algorithmicindent{0.8em}
\algtext*{EndFor}
\newcommand{\R}{\mathbb{R}}
\newcommand{\E}{\mathbb{E}}
\newcommand{\Ind}{\mathbb{I}}
\newcommand{\Normal}{\mathcal{N}}
\newcommand{\Cat}{\mathrm{Categorical}}
\newcommand{\Poisson}{\mathrm{Poisson}}
\newcommand{\Uniform}{\mathrm{Uniform}}
\newcommand{\GammaDist}{\mathrm{Gamma}}
\newcommand{\InvGamma}{\mathrm{InvGamma}}

\newcommand{\diag}{\mathrm{diag}}
\newcommand{\dd}{\mathrm{d}}
\newcommand{\Tobs}{\mathcal{T}_{\mathrm{obs}}}
\newcommand{\Ttrue}{\mathcal{T}_{\mathrm{true}}}
\newcommand{\Tcand}{\mathcal{T}_{\mathrm{cand}}}
\newcommand{\Tevent}{\mathcal{T}_{\mathrm{event}}}
\newcommand{\Tall}{\mathcal{T}_{\mathrm{all}}}
\newcommand{\phat}{\hat{p}_{\phi}}

\definecolor{checkgreen}{RGB}{0,130,100}
\definecolor{crossred}{RGB}{190,50,50}
\colorlet{tablecheckcolor}{checkgreen}
\colorlet{tablecrosscolor}{crossred}
\newcommand{\tablecheck}{\textcolor{tablecheckcolor}{\ensuremath{\bm{\checkmark}}}}
\newcommand{\tablecross}{\textcolor{tablecrosscolor}{\ensuremath{\bm{\times}}}}

\newtheorem{proposition}{Proposition}

\DeclareRobustCommand{\modepropositionnumber}{\ref{prop:fnle_conditional_mode_stability}}

\newcommand{\papertitle}{Broadly Applicable Approximate MCMC for Switching Stochastic Differential Equations Using Uniformization and Time-Conditioned Factorized Neural Likelihood Estimation}

\begin{document}
\runningtitle{Broadly Applicable Approximate MCMC for SSDEs}
\runningauthor{Shion Hosoda, Michiaki Hamada}
\twocolumn[
\aistatstitle{\papertitle}
\aistatsauthor{Shion Hosoda\textsuperscript{1,*} \And Michiaki Hamada\textsuperscript{1,2,3}}
\aistatsaddress{
\textsuperscript{1}\,Waseda University\\
\textsuperscript{2}\,National Institute of Advanced Industrial Science and Technology\\
\textsuperscript{3}\,Nippon Medical School
}]
\ifnum\statePaper=1
\begingroup
\renewcommand{\thefootnote}{}
\begin{NoHyper}
\footnotetext{\textsuperscript{*}Corresponding author: \texttt{shion.hosoda@gmail.com}.}
\end{NoHyper}
\endgroup
\fi

\begin{abstract}
Switching stochastic differential equations (SSDEs) describe continuous-time dynamics whose parameters switch according to a latent regime process that follows a continuous-time Markov chain (CTMC).
By allowing dynamics to change between regimes, SSDEs represent heterogeneous system behavior and have been applied across diverse fields.
However, Bayesian inference for SSDEs remains difficult, and existing SSDE inference methods have limited applicability, with restrictions such as noise-free observations, univariate states, linear drift, or state-independent diffusion.
In this study, we propose an approximate Markov chain Monte Carlo sampler for SSDEs using uniformization and factorized neural likelihood estimation (FNLE), a simulation-based inference method.
Uniformization provides an exact representation of the CTMC but requires SDE transition densities over arbitrary time intervals.
We approximate these densities by training a time-conditioned FNLE model.
The resulting sampler is broadly applicable to SSDEs without requiring analytically tractable transition densities.
In synthetic-data experiments, our method recovered regime paths and parameters for three SSDE models for which previous methods have limited applicability.
We also applied our method to a real dataset and detected a regime transition.
\end{abstract}

\section{Introduction}

Stochastic differential equations (SDEs) provide a natural framework for modeling noisy continuous-time dynamics.
A single SDE may be insufficient in many cases because dynamics in real-world systems often change.
Switching stochastic differential equations (SSDEs) represent such dynamics by allowing the drift and diffusion functions to switch according to the regime at each time.
SSDEs have been applied in many fields, such as ecology~\citep{yan2014}, biology~\citep{hosoda2021}, and economics~\citep{li2013}.
Figure~\ref{fig:ssde_overview} illustrates an SSDE and its noisy observations.

Bayesian inference for SSDEs remains challenging.
It requires joint estimation of latent states, regimes, and model parameters from noisy observations.
Moreover, regime transitions can occur between observation times.
Due to these challenges, existing SSDE inference methods have limited applicability and are typically restricted to settings with noise-free observations, univariate states, linear drift, or state-independent diffusion~\citep{hibbah2020,kohs2021,kohs2022,stumpf-fetizon2026}.

Uniformization is a method to represent the continuous-time Markov chain (CTMC) by a discrete-time regime sequence on Poisson event times~\citep{rao2013}.
When regime-dependent emission likelihoods are available, this representation facilitates parameter inference in CTMC-based models and has been used in a range of applications~\citep{pan2016,georgoulas2018,gonccalves2022}.
For an SSDE, however, the required likelihood involves transition densities of the state process over each interval.
These densities are generally unavailable for SDEs, limiting the direct use of uniformization-based regime updates in general SSDEs.

Simulation-based inference can learn otherwise intractable densities from simulations~\citep{papamakarios2019,cranmer2020}. Neural likelihood estimation learns conditional densities from simulated parameter--data pairs, while factorized neural likelihood estimation (FNLE) exploits the Markov property to approximate trajectory likelihoods from learned local transition densities~\citep{gloeckler2024}. Standard FNLE is formulated for discrete-time Markovian sequences and is typically trained for a fixed transition interval, and it cannot be directly evaluated at the varying intervals introduced by uniformization.

In this study, we propose an approximate Markov chain Monte Carlo (MCMC) sampler for SSDEs using time-conditioned FNLE.
As a foundation, we develop an exact uniformization-based sampler for SSDEs with tractable transition densities and conditional intermediate-state sampling.
We then condition FNLE on elapsed time to approximate regime-specific transition densities over varying intervals and use sampling-importance-resampling (SIR) to approximate conditional intermediate-state sampling.
This yields a broadly applicable sampler for SSDEs without requiring analytically tractable transition densities.

Our contributions are summarized as follows:
\begin{itemize}
\item We combine time-conditioned FNLE with uniformization to construct an approximate MCMC sampler applicable to a broad range of SSDEs.
\item We show that the sampler successfully recovers regime paths and model parameters in synthetic experiments on three SSDE models spanning nonlinear drift, state-dependent diffusion, and multivariate latent states.
\item We apply the sampler to a real dataset and infer a persistent regime change.
\end{itemize}

\begin{figure}[!t]
    \centering
    \includegraphics[width=\linewidth]{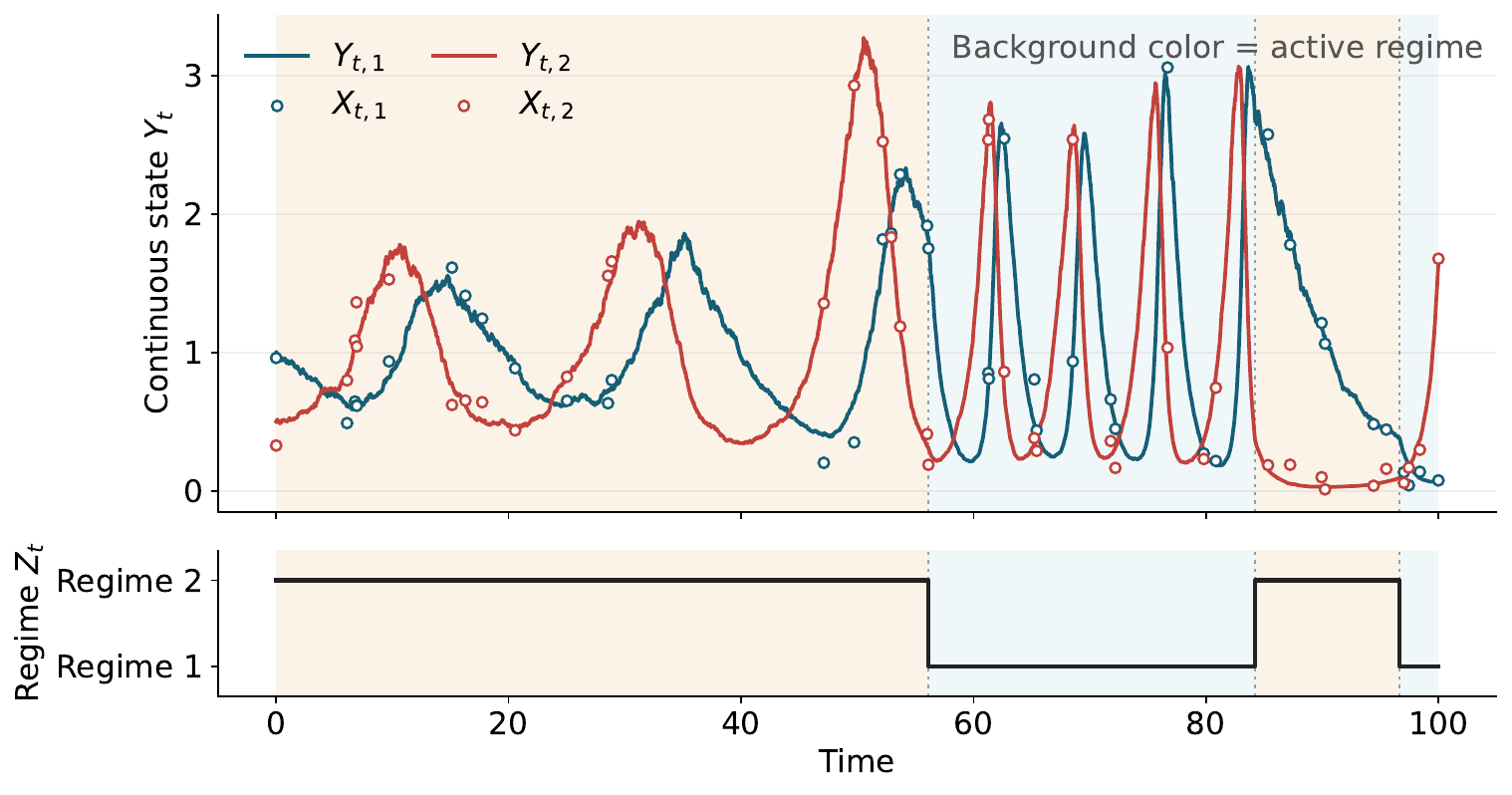}
    \caption{Illustration of a simulated Lotka--Volterra SSDE.
    The upper panel shows the two components of the continuous state and their noisy observations.
    Background shading indicates the active regime, and the lower panel shows the regime path.}
    \label{fig:ssde_overview}
\end{figure}

\subsection{Related Work}
Existing continuous-time SSDE inference methods differ in their model restrictions and treatment of uncertainty.
\citet{blackwell2003} considers SSDEs with regimes observed at discrete times.
\citet{blackwell2016} use uniformization for SDEs with state-dependent switching and tractable transition densities, jointly updating candidate times, states, and regimes using Metropolis--Hastings (MH).
In contrast, our sampler updates regimes conditional on the grid states using a Gibbs update.
\citet{hibbah2020} develop a data-augmentation MCMC method for nonlinear SDEs with state-dependent switching, but demonstrate only univariate models without observation error and note computational and mixing difficulties under dense augmentation.
\citet{kohs2021,kohs2022} demonstrate multivariate inference with observation error under more restrictive models: the former uses variational inference with state-independent diffusion, while the latter uses MCMC with linear-Gaussian dynamics.
\citet{stumpf-fetizon2026} provide discretization-free exact inference for nonlinear univariate SSDEs with state-dependent diffusion and direct observations, while multivariate extensions require substantially stronger conditions and are not demonstrated.

Our approach combines uniformization-based CTMC sampling~\citep{rao2013} with FNLE, a simulation-based inference method for Markovian time series~\citep{gloeckler2024}.
We use time-conditioned FNLE to approximate SDE transition densities over varying time intervals.
Similarly, neural stochastic flows learn time-conditioned SDE transition distributions~\citep{kiyohara2025} but are formulated without conditioning on SDE parameters.
In contrast, our transition-density approximation is conditioned on both time intervals and model parameters, allowing it to be incorporated into an approximate MCMC sampler that jointly infers latent states, regimes, and model parameters.
Table~\ref{tab:ssde_method_comparison} summarizes the differences between our method and existing SSDE inference approaches.

\begin{table*}[t]
    \setlength{\tabcolsep}{4pt}
    \caption{Applicability of SSDE Inference Methods}
    \label{tab:ssde_method_comparison}
    \begin{center}
    \small
    \begin{tabular*}{\textwidth}{@{\extracolsep{\fill}}lccccc@{}}
        \toprule
        Method
        & \shortstack{Observation\\error}
        & \shortstack{Nonlinear\\drift}
        & \shortstack{State-dependent\\diffusion}
        & \shortstack{Intractable\\transition density}
        & \shortstack{Multivariate\\states} \\
        \midrule
        \citet{blackwell2016}
        & \(\triangle\)
        & \(\triangle\)
        & \(\triangle\)
        & \tablecross
        & \tablecheck \\
        \citet{hibbah2020}
        & \(\triangle\)
        & \tablecheck
        & \tablecheck
        & \tablecheck
        & \(\triangle\) \\
        \citet{kohs2021}
        & \tablecheck
        & \(\triangle\)
        & \tablecross
        & \(\triangle\)
        & \tablecheck \\
        \citet{kohs2022}
        & \tablecheck
        & \tablecross
        & \tablecross
        & \tablecross
        & \tablecheck \\
        \citet{stumpf-fetizon2026}
        & \(\triangle\)
        & \tablecheck
        & \tablecheck
        & \tablecheck
        & \(\triangle\) \\
        Proposed method
        & \tablecheck
        & \tablecheck
        & \tablecheck
        & \tablecheck
        & \tablecheck \\
        \bottomrule
    \end{tabular*}
    \par\smallskip
    {\footnotesize
    \tablecheck: Demonstrated applicability;
    \(\triangle\): Applicability not established;
    \tablecross: Excluded by model assumptions.\par}
    \end{center}
\end{table*}

\section{Background}

\subsection{Switching Stochastic Differential Equations}

A switching stochastic differential equation (SSDE) combines a continuous latent state with a regime that determines its drift and diffusion functions~\citep{mao2006}.
Let \(Y_t \in \R^D\) be a continuous latent state and let \(W_t\in\R^D\) be a standard Brownian motion.
The regime \(Z_t\in\{1,\ldots,K\}\) evolves as a CTMC, where \(D\) is the state dimension and \(K\) is the number of regimes.
We consider an SSDE of the form
\begin{align}
    \dd Y_t
    = f(Y_t,Z_t)\dd t
    + g(Y_t,Z_t)\dd W_t,
    \label{eq:ssde}
\end{align}
where \(f:\R^D\times\{1,\ldots,K\}\to\R^D\) and \(g:\R^D\times\{1,\ldots,K\}\to\R^{D\times D}\) are the drift and diffusion functions, respectively.
For each regime \(k\), \(f(\cdot,k)\) and \(g(\cdot,k)\) define the corresponding regime-specific SDE.
The transition rate matrix \(Q=(q_{kh})\) of \(Z_t\) satisfies
\begin{align}
q_{kh} \geq 0 \quad (k\neq h),
\qquad
q_{kk} = -\sum_{h\neq k} q_{kh}.
\end{align}
On the observation interval \([0,T]\), we write \(Y=\{Y_t\}_{t\in[0,T]}\) and \(Z=\{Z_t\}_{t\in[0,T]}\) for the state and regime paths.
Their realizations are denoted by \(y=\{y_t\}_{t\in[0,T]}\) and \(z=\{z_t\}_{t\in[0,T]}\).

\subsection{Uniformization of Continuous-Time Markov Chains}
\label{sec:uniformization}

Uniformization represents a CTMC as a discrete-time Markov chain observed at event times of a Poisson process~\citep{rao2013}.
For a fixed observation grid, augmenting the regime path \(Z\) with candidate times \(\Tcand\) gives an equivalent discrete representation through the almost-sure one-to-one mapping
\begin{align}
    (\widetilde Z_{1:L},\Tevent)=\Psi(Z,\Tcand),
    \label{eq:uniformization_psi}
\end{align}
where \(\Tevent=\Ttrue\cup\Tcand\) combines the true jump times \(\Ttrue\), at which the regime changes, and candidate times \(\Tcand\).
For \(N\) observations at fixed times \(\Tobs=\{t_i^{\mathrm{obs}}\}_{i=1}^N\), with \(0=t_1^{\mathrm{obs}}<\cdots<t_N^{\mathrm{obs}}=T\), the combined grid is
\begin{align}
    \Tall=\Tobs\cup\Tevent=\{t_1,\ldots,t_L\},
    \nonumber\\
    0=t_1<\cdots<t_L=T.
    \label{eq:augmented_grid}
\end{align}
Here, \(\Tevent\) and \(\Tobs\) are almost surely disjoint, and \(\widetilde Z_j=Z_{t_j}\).
The sequence \(z_{1:L}=(z_{t_1},\ldots,z_{t_L})\) is its realization, distinct from the full path \(z\).
Under the right-continuous convention, \(z_j\) is the regime on \([t_j,t_{j+1})\) for \(j=1,\ldots,L-1\); \(z_1\) and \(z_L\) are the initial and terminal regimes.

As an illustrative model, consider conditionally independent latent states with densities \(p(\tilde y_j\mid z_j,\theta)\) and conditionally independent observations with densities \(p(x_i\mid\tilde y_{\ell(i)},\theta)\), where \(\theta\) is fixed and \(t_{\ell(i)}=t_i^{\mathrm{obs}}\).
Given \(\tilde y_{1:L}\) and \(\theta\), observation factors cancel from the regime conditional.
Writing \(e_j(k)=p(\tilde y_j\mid z_j=k,\theta)\) and \(\mathcal E=\{e_j(k)\}_{j=1,\ldots,L;\,k=1,\ldots,K}\), we obtain
\begin{align}
    &p(z_{1:L}\mid\Tall,\mathcal E,Q)
    \nonumber\\
    &\quad\propto
    p(z_1)e_1(z_1)
    \prod_{j=2}^{L}
    A_j(z_{j-1},z_j)e_j(z_j).
    \label{eq:uniformization_hmm}
\end{align}
The transition matrix \(A_j\) depends on \(Q\), the uniformization rate \(\Omega\), and whether \(t_j\) is an event or observation-only time (Appendix~\ref{app:uniformization_matrices}).
This hidden Markov model (HMM) factorization permits forward-filtering backward-sampling (FFBS)~\citep{fruhwirth2006}.

Conditional on the current path and \(Q\), candidate times are sampled as virtual-jump times from a Poisson process with rate \(\Omega(Q)+q_{Z_t Z_t}\), where \(\Omega(Q)>\max_k(-q_{kk})\) is deterministic given \(Q\) (Appendix~\ref{app:uniformization_matrices}).
After FFBS, candidate times may become true jump times, and we update
\begin{align}
    \Ttrue
    \leftarrow
    \{t_j\in\Tall:
    z_{j-1}\neq z_j,\ j=2,\ldots,L\}.
    \label{eq:t_true_update}
\end{align}

\subsection{Factorized Neural Likelihood Estimation for Transition Densities}
Factorized neural likelihood estimation (FNLE) is a method for approximating intractable transition densities in Markov process models~\citep{gloeckler2024}.

For a discrete-time Markov trajectory \(y_{1:J}\) of length \(J\), the trajectory likelihood factorizes as
\begin{align}
    p(y_{1:J} \mid \theta) = p(y_1) \prod_{j=1}^{J-1} p(y_{j+1}\mid y_j,\theta),
\end{align}
where \(\theta\) denotes the parameters that determine the transition density.

FNLE learns \(\phat(y_{j+1}\mid y_j,\theta)\approx p(y_{j+1}\mid y_j,\theta)\) by maximizing the conditional log-likelihood of simulated transitions, where \(\phi\) denotes the density-estimator parameters.
Conditional normalizing flows support both density evaluation and sampling~\citep{papamakarios2019,gloeckler2024}, enabling likelihood approximation and SIR proposals in our sampler.

\section{Problem setting}
\label{sec:problem_setting}
We define the model targeted by our proposed sampler.
We assume that the drift and diffusion functions share common functional forms across regimes as follows:
\begin{align}
    f(y,k) = f(y;\theta_k),
    \qquad
    g(y,k) = g(y;\theta_k),
\end{align}
where \(\theta_k\) is the SDE parameter vector in regime \(k\).
Note that our method is not limited to this case and that functional forms can depend on regimes.
We let \(\Theta\) collect the unconstrained SDE parameters, with separate entries for regime-specific parameters and a single entry for each shared parameter, and write
\begin{align}
    \theta_k=h_k(\Theta),
    \qquad k=1,\ldots,K.
\end{align}
Here, \(h_k\) selects the parameters used in regime \(k\) and applies the transformations required by their constraints.

At the discrete observation times in \(\Tobs\), observations are modeled as
\begin{align}
    X_i \mid Y_{t^{\mathrm{obs}}_i},\tau^2
    \sim
    \Normal\!\left(
    Y_{t^{\mathrm{obs}}_i},
    \diag(\tau^2)
    \right).
    \label{eq:obs_model}
\end{align}
Here, \(X_i\in\R^D\) denotes the random observation at time \(t_i^{\mathrm{obs}}\), and \(x_i\) denotes its realized value.
We write \(X=\{X_i\}_{i=1}^{N}\) and \(x=\{x_i\}_{i=1}^{N}\).
Furthermore, \(\tau^2\in\R_+^D\) is the vector of observation-noise variances.
The initial state and regime have independent priors \(p(y_1)\) and \(p(z_1)\) that do not depend on the unknown parameters.
Prior settings for unknown variables are provided in Appendix~\ref{app:prior_settings}.
The target posterior distribution is written as
\begin{align}
    &p(y, z, \Theta, \tau^2, Q \mid x) \propto p(x, y, z, \Theta, \tau^2, Q) \\
    &\quad= p(x \mid y, \tau^2)p(y \mid z, \Theta)p(z \mid Q) \nonumber\\
    &\qquad{}\times p(\Theta)p(\tau^2)p(Q).
    \label{eq:exact_target}
\end{align}

\section{Proposed Method}
\label{sec:proposed_method}

We first construct an exact MCMC sampler for tractable SSDEs, then use time-conditioned FNLE to obtain an approximate sampler for intractable SSDEs.
We call an SSDE tractable when exact transition-density evaluation and exact sampling from the corresponding bridge distribution are available for each regime-specific SDE.
We refer to all other SSDEs as intractable in this sense.
Here, a bridge distribution is the conditional distribution of the state path between two fixed times, given the endpoint states and the SDE parameters.
We consider a single series; Appendix~\ref{app:multiple_series} gives the extension to multiple series.
On the combined grid in Eq.~\eqref{eq:augmented_grid}, write \(y_j=y_{t_j}\) for \(j=1,\ldots,L\) and \(\Delta_j=t_j-t_{j-1}\) for \(j=2,\ldots,L\).
We write \(Y_{\Tall}\) and \(Y_{\setminus\Tall}\) for the restrictions of \(Y\) to the combined grid and the remaining times, respectively.

\subsection{Exact MCMC for Tractable SSDEs}
\label{sec:block_gibbs_updates}

Uniformization augments the posterior in Eq.~\eqref{eq:exact_target} with \(\Tcand\).
For fixed \(\Tevent\), consider successive block Gibbs updates of \(Y=(Y_{\Tall},Y_{\setminus\Tall})\), \((Z,Y_{\setminus\Tall})\), and \((\Theta,Y_{\setminus\Tall})\).
The unused draws of \(Y_{\setminus\Tall}\) can be trimmed by partially collapsed Gibbs sampling~\citep{vandyk2008}, leaving the following finite-dimensional updates.
Consequently, only the grid values \(y_{1:L}\) are explicitly sampled; the intervening paths are marginalized.

\paragraph{Candidate Times}
Given the current regime path and \(Q\), we sample \(\Tcand\) using the Poisson--Uniform construction in Section~\ref{sec:uniformization} and set \(\Tevent\) and \(\Tall\).
The rate \(\Omega\) and uniformized transition matrix \(B\) are computed as in Appendix~\ref{app:uniformization_matrices}.

\paragraph{Latent States at Newly Inserted Candidate Times}
Before updating all grid values, we sample each block \(y_{a:b}\) of state values at candidate times from the bridge distribution
\begin{align}
    &p(y_{a:b}\mid
    y_{a-1},y_{b+1},
    z_{1:L},\Delta_{a:b+1},\Theta),
    \label{eq:y_bridge_conditional}
\end{align}
where \(y_{a:b}\) is a consecutive block of \(\{y_t\}_{t\in\Tcand}\), with fixed neighboring values \(y_{a-1}\) and \(y_{b+1}\) at adjacent points of \(\Tobs\cup\Ttrue\).

For SSDEs, we redefine \(e_j\) as the exact transition density for \(j=2,\ldots,L\):
\begin{align}
    e_j(k)=p(y_j\mid y_{j-1},\theta_k,\Delta_j).
    \label{eq:ej_def}
\end{align}
Because \(z_{j-1}\) is the regime on \([t_{j-1},t_j)\), the transition to \(y_j\) contributes \(e_j(z_{j-1})\).

\paragraph{Latent States on the Combined Time Grid}
We update \(y_{1:L}\) using an MH kernel targeting
\begin{align}
    &p(y_{1:L}\mid
    z_{1:L},\Tall,\Theta,\tau^2,x)
    \nonumber\\
    &\quad \propto p(y_1) \prod_{j=2}^{L}
    e_j(z_{j-1}) \nonumber\\
    &\qquad{}\times \prod_{i=1}^{N}
    p(
    x_i\mid y_{t_i^{\mathrm{obs}}},\tau^2
    ).
    \label{eq:y_conditional}
\end{align}
\paragraph{Regime Path}
Marginalizing \(Y_{\setminus\Tall}\) gives
\begin{align}
    &p(z_{1:L}\mid y_{1:L},\Tevent,\Theta,Q,\tau^2,x)
    \nonumber\\
    &\quad\propto p(z_1)\prod_{j=2}^{L}
    A_j(z_{j-1},z_j)e_j(z_{j-1}).
    \label{eq:exact_regime_conditional}
\end{align}
This HMM factorization permits FFBS; Appendix~\ref{app:proof_regime_conditional} gives the derivation.

\paragraph{SDE Parameters}
The conditional density is
\begin{align}
    p(\Theta\mid y_{1:L},z_{1:L},\Tall)
    &\propto p(\Theta)\prod_{j=2}^{L}
    e_j(z_{j-1}).
    \label{eq:theta_conditional}
\end{align}
We update \(\Theta\) using an MH kernel targeting this conditional.

\paragraph{Observation Noise and Transition Rates}
We sample \(\tau^2\) and the off-diagonal entries of \(Q\) from their conjugate full conditionals, with virtual jumps marginalized in the \(Q\) update; see Appendix~\ref{app:conjugate_updates}.

With exact sampling from the bridge distribution and conditional-invariant MH kernels, this sweep preserves the exact posterior marginal of the regime path, latent values at \(\Tobs\cup\Ttrue\), and parameters~\citep{liu2001}.
Appendix~\ref{app:exact_block_gibbs} gives the full derivation.

\subsection{Time-Conditioned FNLE for SDE Transition Densities}

To construct the approximate MCMC sampler, we approximate the SDE transition densities required by the exact updates.
For a regime-specific SDE, the transition density is
\begin{align}
    p(y^{\mathrm{next}}\mid
    y^{\mathrm{prev}},\theta_k,\Delta).
\end{align}
Here, \(y^{\mathrm{prev}}\) and \(y^{\mathrm{next}}\) are state values separated by elapsed time \(\Delta\).
We train an FNLE model with the elapsed time \(\Delta\) as a conditioning variable to obtain
\begin{align}
    \phat(y^{\mathrm{next}}\mid
    y^{\mathrm{prev}},\theta_k,\Delta)
    \approx
    p(y^{\mathrm{next}}\mid
    y^{\mathrm{prev}},\theta_k,\Delta).
\end{align}
A single FNLE model can be shared across regimes with common drift and diffusion forms by conditioning on \(\theta_k\); otherwise, a separate model can be trained for each distinct pair of functional forms.
The model and training settings are given in Appendix~\ref{app:fnle_training}.

\subsection{Approximate Markov Chain Monte Carlo Sampler with FNLE}

We obtain the approximate sampler by replacing the transition densities in Section~\ref{sec:block_gibbs_updates} with FNLE densities and using SIR to sample state values at candidate times.
For \(j=2,\ldots,L\), define
\begin{align}
    \hat e_j(k) = \phat(y_j\mid y_{j-1},\theta_k,\Delta_j).
    \label{eq:ehat_def}
\end{align}
Replacing \(e_j\) with \(\hat e_j\) in Eqs.~\eqref{eq:y_conditional}, \eqref{eq:exact_regime_conditional}, and~\eqref{eq:theta_conditional} defines the approximate conditionals on a fixed grid.
The chain factorization still permits FFBS for \(z_{1:L}\).
For \(y_{1:L}\) and \(\Theta\), we use the No-U-Turn Sampler (NUTS)~\citep{hoffman2014}, using gradients of the corresponding FNLE-based conditional log densities.
The candidate-time, observation-noise, and transition-rate updates are unchanged.

For the state values at candidate times, SIR~\citep{rubin1987} approximates the bridge distribution in Eq.~\eqref{eq:y_bridge_conditional} with FNLE transition densities.
Particles are sampled forward from the fixed left endpoint and weighted by the FNLE density at the fixed right endpoint (Appendix~\ref{app:sir_candidate_states}).
Thus, FNLE must support conditional sampling and differentiable density evaluation.

\paragraph{Algorithm Summary}
Algorithm~\ref{alg:ssde_sampler} summarizes the proposed sampler with FNLE.
Here, \(N_{\mathrm{iter}}\) denotes the total number of MCMC sweeps.
Appendix~\ref{app:computational_complexity} describes its per-sweep computational costs.

\begin{algorithm}[t]
\caption{Continuous-time SSDE sampler with time-conditioned FNLE}
\label{alg:ssde_sampler}
\small
\algrenewcommand\algorithmicindent{0.8em}
\begin{algorithmic}[1]
\Require Data \(\{(x_i,t_i^{\mathrm{obs}})\}_{i=1}^{N}\)
\Statex Initial \(\Ttrue,Q,\Theta,\tau^2\); trained \(\phat\)
\State Set \(\Tall=\Tobs\cup\Ttrue\)
\State Initialize \(y_{1:L}\) and \(z_{1:L}\) on \(\Tall\), consistent with \(\Ttrue\)
\State Compute \(\Omega,B\) (Eq.~\eqref{eq:B_uniformization})
\For{iteration \(n=1,\ldots,N_{\mathrm{iter}}\)}
    \State Sample \(\Tcand\) (Section~\ref{sec:uniformization})
    \State Set \(\Tevent =\Ttrue\cup\Tcand\)
    \State Set \(\Tall\) (Eq.~\eqref{eq:augmented_grid})
    \State Re-evaluate \(z_{1:L}\) on the new \(\Tall\)
    \State SIR: sample \(\{y_t\}_{t\in\Tcand}\)
    \State NUTS: sample \(y_{1:L}\) (Eq.~\eqref{eq:y_conditional})
    \State FFBS: sample \(z_{1:L}\) (Eq.~\eqref{eq:exact_regime_conditional})
    \State Set \(\Ttrue\) (Eq.~\eqref{eq:t_true_update})
    \State NUTS: sample \(\Theta\) (Eq.~\eqref{eq:theta_conditional})
    \State Sample \(\tau^2\) (Eq.~\eqref{eq:tau_conditional})
    \State Sample \(\{q_{kh}\}_{k\neq h}\) (Eq.~\eqref{eq:Q_conditional})
    \State Set \(q_{kk}\) (Eq.~\eqref{eq:Q_diagonal}); recompute \(\Omega,B\)
\EndFor
\end{algorithmic}
\end{algorithm}

\subsection{Approximation Properties}

In our approximate MCMC sampler, FNLE approximates the transition densities, and the finite number of SIR particles introduces a separate error in sampling from the bridge distribution.
Because FNLE densities do not necessarily satisfy the Chapman--Kolmogorov equation, inserting intermediate grid points can change the implied transition density.
Consequently, successive sweeps apply the FNLE-based state, regime, and parameter updates on different grids and are not guaranteed to preserve a common approximate target.
Exact or Chapman--Kolmogorov-consistent transition densities remove this inconsistency, but not the error due to the finite number of SIR particles.
On a fixed grid, an MH transition, such as NUTS with fixed tuning parameters, preserves its conditional target and introduces no additional target approximation~\citep{liu2001}.

We assess FNLE error through the displacement of conditional parameter modes.
Consider one series with a realized combined time grid \(\mathcal G=\Tall=\{0=t_1<\cdots<t_L=T\}\), and condition on fixed \(y_{1:L}\) and \(z_{1:L}\).
Let \(q_{\mathcal G}\) and \(\hat q_{\mathcal G}\) be the exact and FNLE-based versions of the unnormalized conditional density in Eq.~\eqref{eq:theta_conditional}.
Let \(\mathcal K\) contain the modes of both \(q_{\mathcal G}\) and \(\hat q_{\mathcal G}\).
For \(j=2,\ldots,L\), define the local log-density error as
\begin{align}
    \eta_j(\Theta)
    &=\log\phat(y_j\mid y_{j-1},h_{z_{j-1}}(\Theta),\Delta_j)
    \nonumber\\
    &\quad-\log p(y_j\mid y_{j-1},h_{z_{j-1}}(\Theta),\Delta_j).
    \label{eq:prop2_local_error}
\end{align}
Write \(\ell_{\mathcal G}=T^{-1}\log q_{\mathcal G}\) and \(\hat\ell_{\mathcal G}=T^{-1}\log\hat q_{\mathcal G}\), and let \(\mathcal M_{\mathcal G}=\operatorname*{arg\,max}_{\Theta\in\mathcal K}\ell_{\mathcal G}(\Theta)\).
For a function \(e\), let \(\operatorname{osc}_{\mathcal K}(e)=\sup_{\mathcal K}e-\inf_{\mathcal K}e\), and let \(\operatorname{dist}(\Theta,\mathcal M_{\mathcal G})=\inf_{\vartheta\in\mathcal M_{\mathcal G}}\|\Theta-\vartheta\|_2\).
We use oscillation because additive log-density errors independent of \(\Theta\) do not affect the modes.
The following proposition bounds the displacement of conditional parameter modes caused by FNLE approximation.

\begin{samepage}
\begin{proposition}[Stability of FNLE-based conditional parameter modes]
\label{prop:fnle_conditional_mode_stability}
Assume that both \(q_{\mathcal G}\) and \(\hat q_{\mathcal G}\) attain their maxima on \(\mathcal K\) for every \(\mathcal G\) and its associated fixed \(y_{1:L}\) and \(z_{1:L}\).
Choose any \(\Theta_{\mathcal G}^{\star}\in\mathcal M_{\mathcal G}\).
Suppose that there exist \(\delta\geq0\) and \(c>0\) such that, uniformly over \(\mathcal G\) and its associated fixed \(y_{1:L}\) and \(z_{1:L}\),
\begin{align}
    \operatorname{osc}_{\mathcal K}(\eta_j)
    &\leq\delta,
    \quad j=2,\ldots,L,
    \nonumber\\
    \ell_{\mathcal G}(\Theta_{\mathcal G}^{\star})
    -\ell_{\mathcal G}(\Theta)
    &\geq
    \frac{c}{2}
    \operatorname{dist}(\Theta,\mathcal M_{\mathcal G})^2,
    \quad \Theta\in\mathcal K.
    \label{eq:prop2_conditional_conditions}
\end{align}
For any \(\hat\Theta_{\mathcal G}\in\operatorname*{arg\,max}_{\Theta\in\mathcal K}\hat\ell_{\mathcal G}(\Theta)\), set \(D_{\mathcal G}=\operatorname{dist}(\hat\Theta_{\mathcal G},\mathcal M_{\mathcal G})\).
Let \(\E_{\mathrm{unif}}[\cdot\mid Q]\) denote expectation over the uniformization event grid generated by a Poisson process with constant rate \(\Omega(Q)\) on \([0,T]\), conditional on \(Q\), with the observation grid fixed.
Then
\begin{align}
    \E_{\mathrm{unif}}[D_{\mathcal G}\mid Q]
    &\leq
    \left[
    \frac{2\delta}{c}
    \left\{\frac{N-1}{T}+\Omega(Q)\right\}
    \right]^{1/2}.
    \label{eq:prop2_expected_mode_stability}
\end{align}
\end{proposition}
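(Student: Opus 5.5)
The plan is a standard perturbation-of-maximizers argument on a fixed grid, followed by averaging over the random uniformization grid.

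\emph{Step 1: the error on a fixed grid.} Fix \(\mathcal G\) together with its \(y_{1:L}\) and \(z_{1:L}\). The prior \(p(\Theta)\) appears in both \(q_{\mathcal G}\) and \(\hat q_{\mathcal G}\), so it cancels in the difference, and
\begin{align}
    \hat\ell_{\mathcal G}-\ell_{\mathcal G}=T^{-1}\sum_{j=2}^{L}\eta_j .
\end{align}
Since oscillation is subadditive, the first condition gives
\begin{align}
    \operatorname{osc}_{\mathcal K}(\hat\ell_{\mathcal G}-\ell_{\mathcal G})\le (L-1)\delta/T .
\end{align}

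\emph{Step 2: the deterministic displacement bound.} Put \(e=\hat\ell_{\mathcal G}-\ell_{\mathcal G}\). Because \(\hat\Theta_{\mathcal G}\) maximizes \(\hat\ell_{\mathcal G}\) on \(\mathcal K\), we have \(\hat\ell_{\mathcal G}(\hat\Theta_{\mathcal G})\ge\hat\ell_{\mathcal G}(\Theta^\star_{\mathcal G})\). This rearranges to
\begin{align}
    \ell_{\mathcal G}(\Theta^\star_{\mathcal G})-\ell_{\mathcal G}(\hat\Theta_{\mathcal G})\le e(\hat\Theta_{\mathcal G})-e(\Theta^\star_{\mathcal G})\le\operatorname{osc}_{\mathcal K}(e).
\end{align}
The quadratic growth condition, applied at \(\Theta=\hat\Theta_{\mathcal G}\in\mathcal K\), then gives
\begin{align}
    \tfrac{c}{2}D_{\mathcal G}^2\le (L-1)\delta/T,
    \qquad\text{so}\qquad
    D_{\mathcal G}\le\left[\tfrac{2\delta}{c}\cdot\tfrac{L-1}{T}\right]^{1/2}.
\end{align}
This holds for every realized grid, which is where the uniformity of \(\delta\) and \(c\) over \(\mathcal G\) is needed.

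\emph{Step 3: averaging over the grid.} Since the square root is concave, Jensen's inequality gives
\begin{align}
    \E_{\mathrm{unif}}[D_{\mathcal G}\mid Q]\le\left[\tfrac{2\delta}{cT}\,\E_{\mathrm{unif}}[L-1\mid Q]\right]^{1/2}.
\end{align}
It remains to bound \(\E[L-1]\). We have \(L=N+|\Tevent|\), because \(\Tevent\) and \(\Tobs\) are almost surely disjoint. Under the stated expectation, the event grid comes from a homogeneous Poisson process of rate \(\Omega(Q)\) on \([0,T]\), so \(\E|\Tevent|=\Omega(Q)T\). Hence \(\E[L-1]=N-1+\Omega(Q)T\). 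Dividing by \(T\) gives exactly the factor \((N-1)/T+\Omega(Q)\) in Eq.~\eqref{eq:prop2_expected_mode_stability}.

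\emph{Main obstacle.} Step 3 is the delicate part, because \(\E_{\mathrm{unif}}\) must be identified with the rate-\(\Omega(Q)\) Poisson process. In the sampler, \(\Ttrue\cup\Tcand\) is in fact distributed as a uniformized Poisson process of rate \(\Omega\): jump times occur at rate \(-q_{zz}\) and virtual jumps at rate \(\Omega+q_{zz}\), and these superpose to rate \(\Omega\). The statement defines the expectation this way directly, so I would cite that definition and the superposition remark rather than derive it. The fixed-grid steps are otherwise routine. The one thing to check is that the prior and any \(\Theta\)-independent terms cancel or only shift the objective by a constant, which is exactly why oscillation, rather than a sup-norm, is the right quantity to control.
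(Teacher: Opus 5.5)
Your proposal is correct and follows the paper's proof essentially step for step. You cancel the prior and use subadditivity of oscillation to get $\operatorname{osc}_{\mathcal K}(\hat\ell_{\mathcal G}-\ell_{\mathcal G})\le (L-1)\delta/T$, compare $\hat\Theta_{\mathcal G}$ with $\Theta^\star_{\mathcal G}$ and apply quadratic growth for the pointwise bound, and then average using Jensen's inequality with $L=N+M$ and $M\mid Q\sim\Poisson\{\Omega(Q)T\}$.
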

\end{samepage}

The first condition bounds parameter-dependent variation in the FNLE log-density error; the second is a quadratic separation condition requiring the normalized log posterior to decrease at least quadratically with distance from its mode set.
Both conditions must hold uniformly over grids and fixed latent variables and are not guaranteed by FNLE training.
This local sensitivity result does not guarantee the marginal posterior or the full sampler.
At a fixed observation frequency \((N-1)/T\), the bound remains uniformly bounded as \(N\) increases, provided that \(\Omega(Q)\), \(\delta\), and \(c^{-1}\) remain bounded.
A proof is provided in Appendix~\ref{app:proof_prop2}.

\section{Numerical Experiments}

MCMC settings and initialization are detailed in Appendix~\ref{app:mcmc_settings}.
Computing infrastructure and software are described in Appendices~\ref{app:computing_infrastructure} and~\ref{app:software_assets}, respectively.

\begin{figure}[t!]
    \centering
    \includegraphics[width=\linewidth]{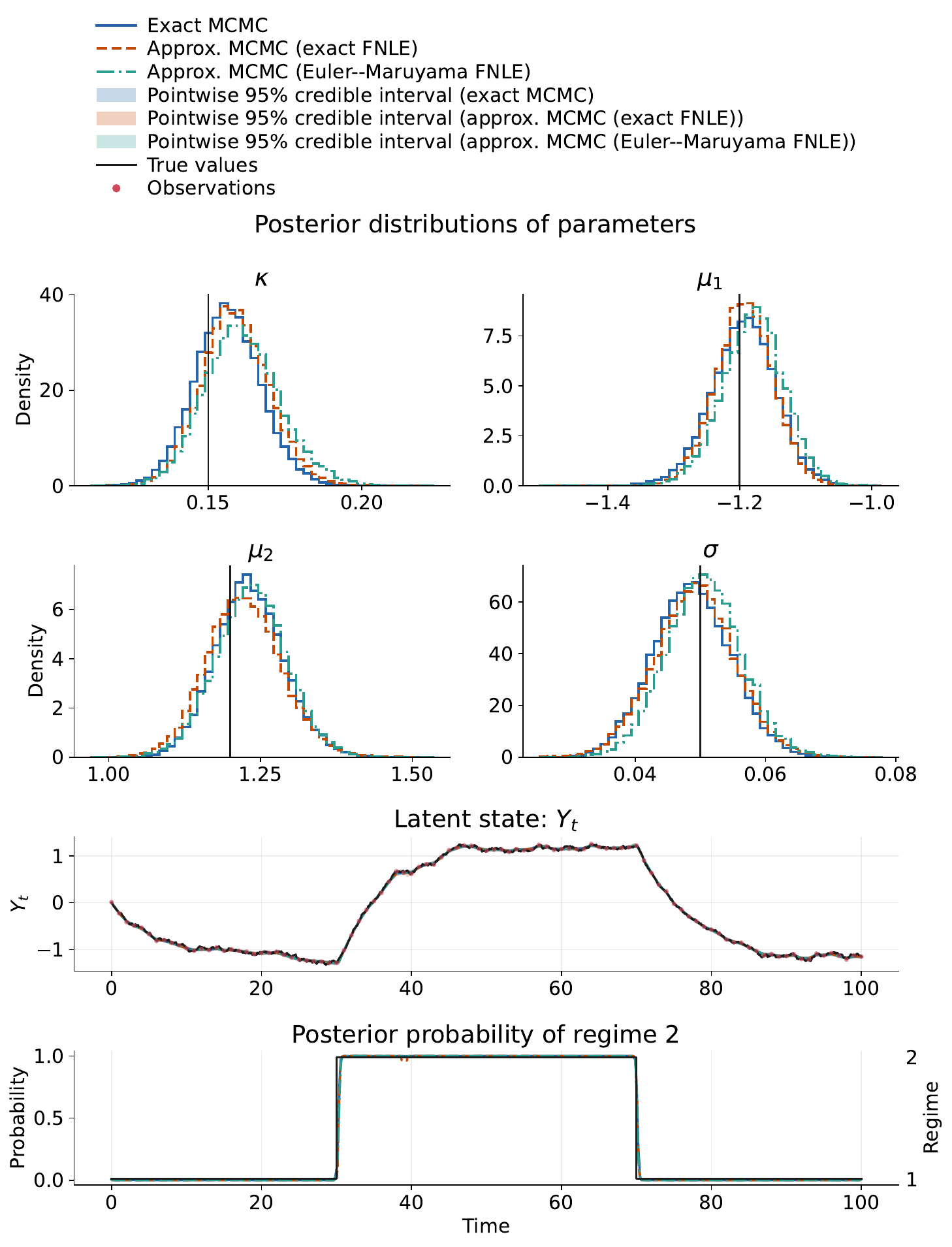}
    \caption{Comparison of inference results for the OU experiment using an exact MCMC sampler and two approximate MCMC samplers with FNLE trained on exact transitions and Euler--Maruyama simulations, respectively.}
    \label{fig:ou_posterior_comparison}
\end{figure}

\subsection{Posterior Comparison on an Ornstein--Uhlenbeck Model}
\label{sec:ou_comparison}

\begin{table*}[t!]
    \setlength{\tabcolsep}{4pt}
    \caption{Characteristics of the SSDE Models Used to Generate the Synthetic Datasets}
    \label{tab:synthetic_model_characteristics}
    \begin{center}
    \small
    \begin{tabular*}{\textwidth}{@{\extracolsep{\fill}}lccccc@{}}
        \toprule
        Model
        & \begin{tabular}[c]{@{}c@{}}Observation\\error\end{tabular}
        & \begin{tabular}[c]{@{}c@{}}Nonlinear\\drift\end{tabular}
        & \begin{tabular}[c]{@{}c@{}}State-dependent\\diffusion\end{tabular}
        & \begin{tabular}[c]{@{}c@{}}Intractable\\transition density\end{tabular}
        & \begin{tabular}[c]{@{}c@{}}Multivariate\\states\end{tabular} \\
        \midrule
        Ornstein--Uhlenbeck
        & \tablecheck
        & \tablecross
        & \tablecross
        & \tablecross
        & \tablecross \\
        Lotka--Volterra
        & \tablecheck
        & \tablecheck
        & \tablecheck
        & \tablecheck
        & \tablecheck \\
        Gene-expression CLE
        & \tablecheck
        & \tablecross
        & \tablecheck
        & \tablecheck
        & \tablecheck \\
        \begin{tabular}[c]{@{}l@{}}Susceptible--infected--recovered\\epidemic model\end{tabular}
        & \tablecheck
        & \tablecheck
        & \tablecheck
        & \tablecheck
        & \tablecheck \\
        \bottomrule
    \end{tabular*}
    \par\smallskip
    {\footnotesize
    \tablecheck: Present;
    \tablecross: Absent.\par}
    \end{center}
\end{table*}

\begin{figure*}[t!]
    \centering
    \includegraphics[width=\textwidth]{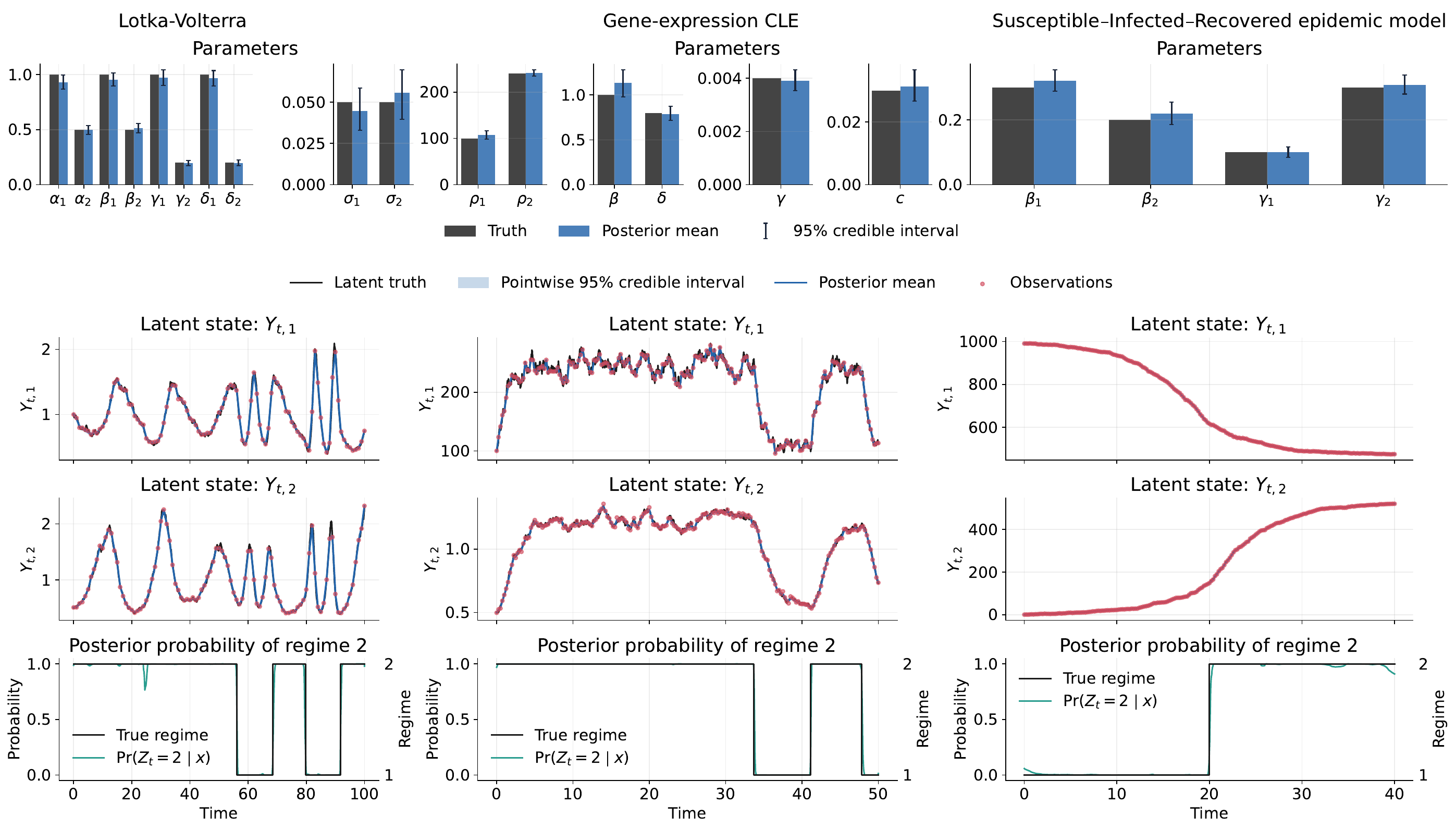}
    \caption{Inference results for synthetic data from three SSDEs with regime-specific Lotka--Volterra, gene-expression chemical Langevin, and susceptible--infected--recovered epidemic models (left to right).}
    \label{fig:synthetic_inference_results}
\end{figure*}

To assess the effect of FNLE approximation error, we compare posterior distributions obtained with two approximate MCMC samplers and an exact MCMC sampler (Section~\ref{sec:block_gibbs_updates}) for a tractable SSDE.
We use an SSDE with \(K=2\) and an Ornstein--Uhlenbeck (OU) model~\citep{uhlenbeck1930} in each regime.
In regime \(k\), the model is
\begin{align}
    \dd Y_t=\kappa(\mu_k-Y_t)\dd t+\sigma\dd W_t.
    \label{eq:synthetic_ou_model}
\end{align}
Here, \(W_t\) is a one-dimensional standard Brownian motion.
Each regime-specific OU SDE has an exact transition density and bridge distribution, making this SSDE tractable under the definition in Section~\ref{sec:proposed_method}.
Figure~\ref{fig:ou_posterior_comparison} compares posterior parameter distributions from the three samplers.
The sampler using FNLE trained on exact transitions produces posterior marginals close to those from the exact sampler.
The marginals remain similar when FNLE is trained on Euler--Maruyama simulations instead, as in the subsequent experiments on intractable SSDEs.
All three samplers recovered the SDE parameters and regime path, with \(\hat R\) values close to one (Appendix Table~\ref{tab:ou_diagnostics}).
The experimental settings are given in Appendix~\ref{app:synthetic_ou}.

\subsection{Synthetic-Data Experiments}
\label{sec:synthetic_data_inference}

We evaluate regime-path recovery and estimation of \(\Theta\) using synthetic data from three SSDE models with \(K=2\) and known regimes and parameters.
The remaining three rows of Table~\ref{tab:synthetic_model_characteristics} summarize these intractable SSDEs; the first row describes the tractable OU benchmark in Section~\ref{sec:ou_comparison}.

All three models include observation error, state-dependent diffusion, and multivariate latent states.
The previous methods in Table~\ref{tab:ssde_method_comparison} have not demonstrated applications combining these features.

\(W_{t,1}\) and \(W_{t,2}\) denote independent Brownian motions.
Complete settings of these three experiments are given in Appendices~\ref{app:synthetic_lv}, \ref{app:synthetic_cle}, and \ref{app:synthetic_sir}, respectively.

\paragraph{Lotka--Volterra Equation}
We write \(Y_{t,1}\) and \(Y_{t,2}\) for predator and prey abundance, respectively.
The drift follows the classical Lotka--Volterra (LV) predator--prey model~\citep{lotka1920,volterra1927}, and independent multiplicative noise is introduced as in \citet{li2009}.
In regime \(k\), the model is
\begin{equation}
\begin{aligned}
    \dd Y_{t,1}
    &=\left(\gamma_kY_{t,2}Y_{t,1}
    -\delta_kY_{t,1}\right)\dd t \\
    &\quad+\sigma_1Y_{t,1}\,\dd W_{t,1},\\
    \dd Y_{t,2}
    &=\left(\alpha_kY_{t,2}
    -\beta_kY_{t,2}Y_{t,1}\right)\dd t \\
    &\quad+\sigma_2Y_{t,2}\,\dd W_{t,2}.
\end{aligned}
\label{eq:synthetic_lv_model}
\end{equation}

\paragraph{Chemical Langevin Equation}
We write \(Y_{t,1}\) for the mRNA copy number and \(Y_{t,2}\) for the fluorescence intensity proportional to the protein copy number, using continuous approximations to the copy numbers.
We use a chemical Langevin equation (CLE)~\citep{gillespie2000}, whose dynamics in regime \(k\) are
\begin{equation}
\begin{aligned}
    \dd Y_{t,1}
    &=(\alpha_k-\beta Y_{t,1})\dd t \\
    &\quad+\sqrt{\alpha_k+\beta Y_{t,1}}\,\dd W_{t,1},\\
    \dd Y_{t,2}
    &=(\gamma Y_{t,1}-\delta Y_{t,2})\dd t \\
    &\quad+c\sqrt{\gamma Y_{t,1}+\delta Y_{t,2}}\,\dd W_{t,2}.
\end{aligned}
\label{eq:synthetic_cle_model}
\end{equation}
For inference, we reparameterize the CLE using \(\rho_k=\alpha_k/\beta\) and \(\beta\) instead of \(\alpha_k\) and \(\beta\) to improve MCMC mixing.
The coefficient \(c^2\) converts protein copy number to fluorescence intensity.

\paragraph{Susceptible--Infected--Recovered Epidemic Model}
We write \(Y_{t,1}\) and \(Y_{t,2}\) for the susceptible and recovered populations, respectively.
The drift follows the classical susceptible--infected--recovered epidemic model~\citep{kermack1927}, and the diffusion terms follow the chemical Langevin approximation~\citep{gillespie2000}.
The infected population is determined by \(I_t=N_{\mathrm{pop}}-Y_{t,1}-Y_{t,2}\), where \(N_{\mathrm{pop}}\) is the fixed total population.
In regime \(k\), the model is
\begin{equation}
\begin{aligned}
    \dd Y_{t,1}
    &=-\frac{\beta_kY_{t,1}I_t}{N_{\mathrm{pop}}}\dd t -\sqrt{\frac{\beta_kY_{t,1}I_t}{N_{\mathrm{pop}}}}\,\dd W_{t,1},\\
    \dd Y_{t,2}
    &=\gamma_kI_t\dd t
    +\sqrt{\gamma_kI_t}\,\dd W_{t,2}.
\end{aligned}
\label{eq:synthetic_sir_model}
\end{equation}

Posterior means were close to the true parameters with narrow 95\% credible intervals in all three experiments (Figure~\ref{fig:synthetic_inference_results}).
The posterior regime probabilities also recovered the true switching pattern.
The \(\tau_1\) parameter in the CLE experiment and \(\tau_1,\tau_2\) in the susceptible--infected--recovered experiment showed poor mixing (Appendix Tables~\ref{tab:cle_diagnostics} and~\ref{tab:sir_diagnostics}); diagnostics for the LV experiment are in Appendix Table~\ref{tab:lv_diagnostics}.

\subsection{Real-Data Application}

We applied the proposed method to 71 paired abundance observations of \textit{Didinium} and \textit{Paramecium} from \citet{veilleux1979}, using the LV SSDE in Eq.~\eqref{eq:synthetic_lv_model} with \(K=2\).
Data preprocessing and inference settings are given in Appendix~\ref{app:real_data_settings}.

\begin{figure}[t!]
    \centering
    \includegraphics[width=\linewidth]{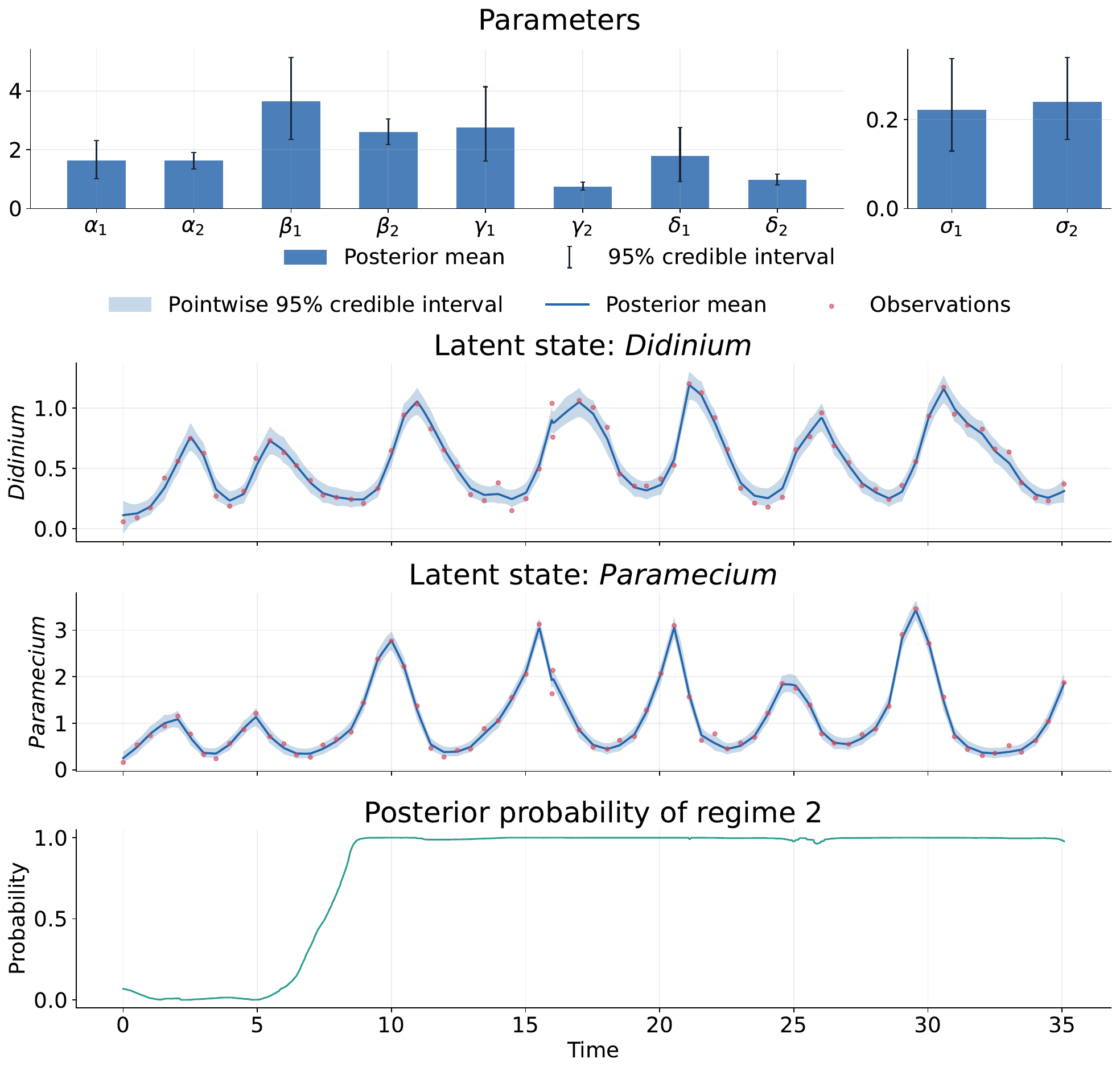}
    \caption{Real-data inference results for the \textit{Didinium}--\textit{Paramecium} time series using a Lotka--Volterra SSDE.}
    \label{fig:real_data_inference_results}
\end{figure}

As shown in Figure~\ref{fig:real_data_inference_results}, the posterior probability of regime 2 increased around \(t=7\) and remained close to one thereafter.
Notably, the initial regime identified in an unsupervised manner closely corresponds to the transient period that was manually excluded prior to fitting a LV model in a previous analysis of these data~\citep{liao2020}.
Appendix Figure~\ref{fig:real_data_multichain_trace} shows similar chain-specific marginal posterior distributions and overlapping traces across the four chains.
The diagnostics in Appendix Table~\ref{tab:real_lv_diagnostics} also suggest convergence.

\section{Conclusion}

We proposed a broadly applicable approximate MCMC sampler combining time-conditioned FNLE and uniformization for joint inference of SSDE latent states, regimes, and parameters.
Starting from an exact uniformization-based MCMC sampler for tractable SSDEs, we used FNLE to approximate transition densities and SIR to sample approximately from bridge distributions, extending the approach to intractable SSDEs.
In the OU benchmark, approximate samplers using FNLE trained on exact or Euler--Maruyama transitions produced posterior parameter marginals similar to those from exact MCMC.
The approximate sampler also recovered parameters and regime paths for the three SSDE models, to which existing methods have limited applicability.
In the real-data analysis of the \textit{Didinium}--\textit{Paramecium} time series, the method identified a persistent regime transition from the initial transient dynamics to the subsequent predator--prey dynamics.

\clearpage
\subsection*{AI use statement}

In this work, we used generative AI tools to help develop theoretical models or conceptual frameworks, formulate mathematical claims, provide critical ingredients for proving mathematical claims, assist in the writing of proofs, design or provide feedback on research methodology or experiments, implement methods, and assist with translation.
We have not used generative AI tools to generate synthetic datasets, propose or refine hypotheses, clean and reformat datasets, support qualitative and thematic data analysis, or interpret results.
Additionally, we used generative AI tools to create or modify scientific figures or images, suggest experimental parameters, create or edit software code, draft parts of a research paper, summarize or analyze existing literature, brainstorm, source or search for information, edit a research paper to improve readability, identify relevant literature, and propose a title or keywords.
We have reviewed all AI-assisted work.
We manually reviewed all LLM-generated code, draft text, and proofs. All changes, whether AI-assisted or manual, were tracked using \texttt{git}, and we inspected the corresponding diffs to carefully review modifications made with the assistance of LLMs. LLM-assisted proofs were checked by the authors for correctness. All references were added to the BibTeX file manually after verifying their bibliographic information against the original sources and reading the cited works to confirm their content and relevance.
We take responsibility for the final content of this work, including text, claims, or artifacts produced with the aid of generative AI.

\ifpreprintversion
\subsection*{Code Availability}

The code for the numerical experiments is available at \url{https://github.com/shion-h/fnle-ssde}.
\fi

\bibliographystyle{plainnat}
\bibliography{references}

\ifpreprintversion\else
\section*{Checklist}

\begin{enumerate}

  \item For all models and algorithms presented, check if you include:
  \begin{enumerate}
    \item A clear description of the mathematical setting, assumptions, algorithm, and/or model. Yes; Sections~\ref{sec:problem_setting} and~\ref{sec:proposed_method}, and Algorithm~\ref{alg:ssde_sampler}.
    \item An analysis of the properties and complexity (time, space, sample size) of any algorithm. Yes; Proposition~\ref{prop:fnle_conditional_mode_stability} and Appendices~\ref{app:exact_block_gibbs} and~\ref{app:computational_complexity}.
    \item (Optional) Anonymized source code, with specification of all dependencies, including external libraries. Yes; anonymized supplementary code includes \texttt{uv.lock} specifying dependencies and their versions.
  \end{enumerate}

  \item For any theoretical claim, check if you include:
  \begin{enumerate}
    \item Statements of the full set of assumptions of all theoretical results. Yes; Proposition~\ref{prop:fnle_conditional_mode_stability}.
    \item Complete proofs of all theoretical results. Yes; Appendices~\ref{app:exact_block_gibbs} and~\ref{app:proof_prop2}.
    \item Clear explanations of any assumptions. Yes; the discussion following Proposition~\ref{prop:fnle_conditional_mode_stability}.
  \end{enumerate}

  \item For all figures and tables that present empirical results, check if you include:
  \begin{enumerate}
    \item The code, data, and instructions needed to reproduce the main experimental results (either in the supplemental material or as a URL). Yes; supplementary experimental code, data retrieval code, and execution instructions.
    \item All the training details (e.g., data splits, hyperparameters, how they were chosen). Yes; Appendix~\ref{app:fnle_training}.
    \item A clear definition of the specific measure or statistics and error bars (e.g., with respect to the random seed after running experiments multiple times). Yes; legends in Figures~\ref{fig:ou_posterior_comparison}--\ref{fig:real_data_inference_results}.
    \item A description of the computing infrastructure used. (e.g., type of GPUs, internal cluster, or cloud provider). Yes; Appendix~\ref{app:computing_infrastructure}.
  \end{enumerate}

  \item If you are using existing assets (e.g., code, data, models) or curating/releasing new assets, check if you include:
  \begin{enumerate}
    \item Citations of the creator If your work uses existing assets. Yes; Appendix~\ref{app:software_assets}.
    \item The license information of the assets, if applicable. Yes; Appendix~\ref{app:software_assets}.
    \item New assets either in the supplemental material or as a URL, if applicable. Yes; supplementary experimental code.
    \item Information about consent from data providers/curators. Not Applicable; we use publicly available ecological data.
    \item Discussion of sensible content if applicable, e.g., personally identifiable information or offensive content. Not Applicable; no personal or sensitive data are used.
  \end{enumerate}

  \item If you used crowdsourcing or conducted research with human subjects, check if you include:
  \begin{enumerate}
    \item The full text of instructions given to participants and screenshots. Not Applicable; no human participants.
    \item Descriptions of potential participant risks, with links to Institutional Review Board (IRB) approvals if applicable. Not Applicable; no human participants.
    \item The estimated hourly wage paid to participants and the total amount spent on participant compensation. Not Applicable; no human participants.
  \end{enumerate}

\end{enumerate}
\fi

\clearpage
\appendix
\onecolumn
\thispagestyle{empty}
\aistatstitle{\papertitle: \\ Supplementary Materials}

\section{Exact Sampling Construction}

We derive an exact version of Algorithm~\ref{alg:ssde_sampler} from the proper full-path posterior in Eq.~\eqref{eq:exact_target}, assuming exact SDE transition densities and sampling from bridge distributions are available.
For any time set \(\mathcal G\), define \(Y_{\mathcal G}=\{Y_t\}_{t\in\mathcal G}\) and \(Y_{\setminus\mathcal G}=\{Y_t\}_{t\in[0,T]\setminus\mathcal G}\), with realizations denoted by lowercase symbols, and let \(\mathcal T_{\mathrm{ret}}=\Tobs\cup\Ttrue\).
Each sweep first samples \((\Tcand,Y_{\setminus\mathcal T_{\mathrm{ret}}})\) conditional on the current regime, retained state values, and parameters.
With the resulting \(\Tevent\) and \(\Tall\) fixed, it updates the blocks \(Y\), \((Z,Y_{\setminus\Tall})\), and \((\Theta,Y_{\setminus\Tall})\), followed by \(\tau^2\).
The candidate-time augmentation is then marginalized before updating \(Q\).
The factorizations below permit trimming unused path draws by partially collapsed Gibbs sampling~\citep{vandyk2008}.

\subsection{Uniformization Matrices and Candidate Times}
\label{app:uniformization_matrices}

For a transition rate matrix \(Q\), choose a deterministic rate \(\Omega=\Omega(Q)>\max_k(-q_{kk})\).
The uniformized chain has transition matrix~\citep{rao2013}
\begin{align}
    B=I_K+\frac{Q}{\Omega},
    \label{eq:B_uniformization}
\end{align}
where \(I_K\) is the \(K\times K\) identity matrix.
On \(\Tall\), event times permit regime transitions, whereas observation-only times preserve the regime:
\begin{align}
    A_j=
    \begin{cases}
        B, & t_j\in\Tevent,\\
        I_K, & t_j\in\Tobs,
    \end{cases}
    \qquad j=2,\ldots,L.
    \label{eq:Aj_def}
\end{align}
Event times almost surely do not coincide with the fixed observation times.
See \citet{rao2013} for the uniformization construction and its MCMC application.

For candidate generation, let \([a_r,b_r]\), \(r=1,\ldots,R_Z\), be the intervals between consecutive times in \(\{0,T\}\cup\Ttrue\), with regime \(k_r\) in each interval's interior.
Conditional on the current path and \(Q\), sample independently across intervals
\begin{align}
    M_r \sim \Poisson\{(\Omega+q_{k_r k_r})(b_r-a_r)\},
    \label{eq:candidate_count}
\end{align}
followed by
\begin{align}
    u_{r,m}\mid M_r
    &\overset{\mathrm{i.i.d.}}{\sim}
    \Uniform(a_r,b_r),\qquad m=1,\ldots,M_r,
    \label{eq:candidate_times}
\end{align}
where \(\Poisson(\lambda)\) has mean \(\lambda\), and \(\Uniform(a,b)\) is uniform on \((a,b)\).
The sampled times \(u_{r,m}\) form \(\Tcand\).

\subsection{Block Gibbs Construction}
\label{app:exact_block_gibbs}

\subsubsection{Candidate Grids and Path Augmentation}

Conditional on \(Z\), \(\mathcal T_{\mathrm{ret}}\) is fixed, and the \((\Tcand,Y_{\setminus\mathcal T_{\mathrm{ret}}})\) update factors by conditional independence:
\begin{align}
    &p(\Tcand,y_{\setminus\mathcal T_{\mathrm{ret}}}\mid
    y_{\mathcal T_{\mathrm{ret}}},z,\Theta,Q,\tau^2,x)
    \nonumber\\
    &\quad=
    p(\Tcand\mid z,Q)
    \nonumber\\
    &\qquad{}\times
    p(y_{\setminus\mathcal T_{\mathrm{ret}}}\mid
    y_{\mathcal T_{\mathrm{ret}}},z,\Theta).
    \label{eq:exact_grid_augmentation}
\end{align}
The first factor is the conditional Poisson process in Eqs.~\eqref{eq:candidate_count} and~\eqref{eq:candidate_times}~\citep{rao2013}.

Use the local indices \(a:b\) from Eq.~\eqref{eq:y_bridge_conditional}, with endpoints \(t_{a-1}\) and \(t_{b+1}\) consecutive in \(\mathcal T_{\mathrm{ret}}\) and regime \(k\) between them.
By the Markov property and the Chapman--Kolmogorov equation, its candidate-time marginal is
\begin{align}
    &p(y_{a:b}\mid y_{a-1},y_{b+1},z,\Tall,\Theta)
    \nonumber\\
    &\quad=
    \frac{
        \prod_{j=a}^{b+1}
        p(y_j\mid y_{j-1},\theta_k,\Delta_j)
    }{
        p(y_{b+1}\mid y_{a-1},\theta_k,t_{b+1}-t_{a-1})
    }.
    \label{eq:exact_candidate_bridge}
\end{align}
Sampling the intervening paths from their bridge distributions conditional on \(y_{\Tcand}\) yields \(Y_{\setminus\mathcal T_{\mathrm{ret}}}\).

\subsubsection{Latent-State Update}

The \(Y\) block factors as
\begin{align}
    &p(y_{\Tall},y_{\setminus\Tall}\mid
    z,\Tevent,\Theta,\tau^2,x)
    \nonumber\\
    &\quad=
    p(y_{\Tall}\mid z,\Tevent,\Theta,\tau^2,x)
    \nonumber\\
    &\qquad{}\times
    p(y_{\setminus\Tall}\mid y_{\Tall},z,\Tevent,\Theta).
    \label{eq:exact_y_block}
\end{align}
Since \(z_{j-1}\) is the regime on \([t_{j-1},t_j)\), the Markov property gives
\begin{align}
    &p(y_{1:L}\mid z_{1:L},\Tevent,\Theta)
    \nonumber\\
    &\quad=
    p(y_1)
    \prod_{j=2}^{L}
    p(y_j\mid y_{j-1},\theta_{z_{j-1}},\Delta_j).
    \label{eq:exact_grid_path_density}
\end{align}
Multiplying by \(p(x\mid y_{1:L},\tau^2)\) gives Eq.~\eqref{eq:y_conditional}.

\subsubsection{Regime Conditional Distribution}
\label{app:proof_regime_conditional}

The \((Z,Y_{\setminus\Tall})\) block factors as
\begin{align}
    &p(z,y_{\setminus\Tall}\mid y_{\Tall},\Tevent,\Theta,Q,\tau^2,x)
    \nonumber\\
    &\quad=p(z\mid y_{\Tall},\Tevent,\Theta,Q,\tau^2,x)
    \nonumber\\
    &\qquad{}\times p(y_{\setminus\Tall}\mid y_{\Tall},z,\Tevent,\Theta).
    \label{eq:exact_z_block}
\end{align}
Uniformization~\citep{rao2013} and identity transitions at observation-only times give
\begin{align}
    &p(\Tevent,z_{1:L}\mid Q)
    \nonumber\\
    &\quad=
    p(\Tevent\mid\Omega(Q))
    p(z_1)
    \prod_{j=2}^{L}
    A_j(z_{j-1},z_j).
    \label{eq:exact_uniformized_regime_density}
\end{align}
Here, \(p(\Tevent\mid\Omega(Q))\) is the Poisson-process density.
Bayes' rule with Eqs.~\eqref{eq:exact_uniformized_regime_density} and~\eqref{eq:exact_grid_path_density} expresses the first factor in Eq.~\eqref{eq:exact_z_block} as Eq.~\eqref{eq:exact_regime_conditional}.

\subsubsection{Parameter Updates}

The \((\Theta,Y_{\setminus\Tall})\) block factors as
\begin{align}
    &p(\Theta,y_{\setminus\Tall}\mid
    y_{\Tall},z,\Tevent,Q,\tau^2,x)
    \nonumber\\
    &\quad=
    p(\Theta\mid y_{\Tall},z,\Tevent)
    \nonumber\\
    &\qquad{}\times
    p(y_{\setminus\Tall}\mid y_{\Tall},z,\Tevent,\Theta).
    \label{eq:exact_theta_block}
\end{align}
Using Eq.~\eqref{eq:exact_grid_path_density} and the independent prior for \(\Theta\) gives Eq.~\eqref{eq:theta_conditional} for the first factor.
The \(\tau^2\) update in Eq.~\eqref{eq:tau_conditional} depends on \(Y\) only through \(Y_{\Tobs}\).

Marginalizing \(\Tcand\) recovers Eq.~\eqref{eq:exact_target}, under which \(Q\mid z\) follows Eq.~\eqref{eq:Q_conditional}.
After this collapsed update, new candidate times must be generated using the updated \(Q\) before any further grid-dependent update~\citep{vandyk2008}.

\subsubsection{Trimming and Invariance Across Iterations}

Within a sweep, each draw of \(Y_{\setminus\Tall}\) is unused by the subsequent marginal update; the \(\tau^2\) and \(Q\) updates also require no such values.

Every updated true-jump time lies in \(\Tevent\), implying that the updated \(\mathcal T_{\mathrm{ret}}\) is a subset of the current \(\Tall\).
Equation~\eqref{eq:exact_grid_augmentation} resamples the path outside this retained set before any values at new candidate times are used.
Thus, the final intervening-path draw of the previous sweep is also unused.

Trimming these draws preserves the retained-variable transition kernel~\citep{vandyk2008}.
The block updates and augmentation preserve the full-path posterior marginal; hence, at sweep boundaries, the invariant distribution is the marginal of Eq.~\eqref{eq:exact_target} on
\begin{align}
    (Z,Y_{\mathcal T_{\mathrm{ret}}},\Theta,\tau^2,Q),
    \qquad
    \mathcal T_{\mathrm{ret}}=\Tobs\cup\Ttrue.
\end{align}
In particular, its \((Z,\Theta,\tau^2,Q)\) marginal is the desired posterior marginal.

For fixed \(\Tevent\), the retained conditionals share the same marginalized posterior.
The \(Y_{\Tall}\) and \(\Theta\) draws may therefore be replaced by Markov transitions preserving their respective conditionals, including NUTS with fixed tuning~\citep{liu2001,hoffman2014}.

\section{Proof of Proposition~\modepropositionnumber}
\label{app:proof_prop2}

\begin{proof}
Fix a realized grid \(\mathcal G\) and its associated \(y_{1:L}\) and \(z_{1:L}\).
Because the exact and FNLE-based conditional densities differ only in their transition-density factors, they can be written with a common multiplicative convention as
\begin{align*}
    q_{\mathcal G}(\Theta)
    &=p(\Theta)
    \prod_{j=2}^{L}
    p(y_j\mid y_{j-1},h_{z_{j-1}}(\Theta),\Delta_j),\\
    \hat q_{\mathcal G}(\Theta)
    &=p(\Theta)
    \prod_{j=2}^{L}
    \phat(y_j\mid y_{j-1},h_{z_{j-1}}(\Theta),\Delta_j).
\end{align*}
Any omitted factors are independent of \(\Theta\) and therefore do not affect either conditional mode.
The common prior factor cancels from the difference between the normalized log densities, and we have
\begin{align*}
    e_{\mathcal G}(\Theta)
    &:=\hat\ell_{\mathcal G}(\Theta)
    -\ell_{\mathcal G}(\Theta)\\
    &=\frac{1}{T}
    \sum_{j=2}^{L}\eta_j(\Theta).
\end{align*}
For arbitrary functions \(f_2,\ldots,f_L\) on \(\mathcal K\),
\begin{align*}
    \operatorname{osc}_{\mathcal K}
    \left(\sum_{j=2}^{L}f_j\right)
    &\leq
    \sum_{j=2}^{L}
    \operatorname{osc}_{\mathcal K}(f_j).
\end{align*}
Applying this inequality to the local errors and using the first condition in Eq.~\eqref{eq:prop2_conditional_conditions} yields
\begin{align}
    \operatorname{osc}_{\mathcal K}(e_{\mathcal G})
    &\leq
    \frac{1}{T}
    \sum_{j=2}^{L}
    \operatorname{osc}_{\mathcal K}(\eta_j)
    \nonumber\\
    &\leq
    \frac{(L-1)\delta}{T}.
    \label{eq:prop2_accumulated_error}
\end{align}

The maximizing property of \(\hat\Theta_{\mathcal G}\) gives
\begin{align*}
    \hat\ell_{\mathcal G}
    (\hat\Theta_{\mathcal G})
    &\geq
    \hat\ell_{\mathcal G}
    (\Theta_{\mathcal G}^{\star}).
\end{align*}
Substituting \(\hat\ell_{\mathcal G}=\ell_{\mathcal G}+e_{\mathcal G}\) and rearranging gives
\begin{align*}
    \ell_{\mathcal G}(\Theta_{\mathcal G}^{\star})
    -\ell_{\mathcal G}(\hat\Theta_{\mathcal G})
    &\leq
    e_{\mathcal G}(\hat\Theta_{\mathcal G})
    -e_{\mathcal G}(\Theta_{\mathcal G}^{\star})\\
    &\leq
    \operatorname{osc}_{\mathcal K}(e_{\mathcal G}).
\end{align*}
Combining the second condition in Eq.~\eqref{eq:prop2_conditional_conditions} with Eq.~\eqref{eq:prop2_accumulated_error} therefore gives
\begin{align*}
    \frac{c}{2}D_{\mathcal G}^{2}
    &\leq
    \ell_{\mathcal G}(\Theta_{\mathcal G}^{\star})
    -\ell_{\mathcal G}(\hat\Theta_{\mathcal G})\\
    &\leq
    \frac{(L-1)\delta}{T}.
\end{align*}
Consequently, every realized grid satisfies the pointwise bound
\begin{align}
    D_{\mathcal G}
    &\leq
    \left\{
    \frac{2(L-1)\delta}{cT}
    \right\}^{1/2}.
    \label{eq:prop2_realized_grid_bound}
\end{align}

It remains to average Eq.~\eqref{eq:prop2_realized_grid_bound} over the joint uniformization construction.
Let \(M=|\Tevent|\) be the number of uniformization events.
Conditional on \(Q\), the event process is a Poisson process with rate \(\Omega(Q)\), and hence
\begin{align*}
    M\mid Q
    &\sim
    \Poisson\{\Omega(Q)T\}.
\end{align*}
The Poisson event times are almost surely distinct from one another and from the \(N\) fixed observation times, and \(L=N+M\) almost surely and
\begin{align}
    \E_{\mathrm{unif}}[L-1\mid Q]
    &=N-1+\Omega(Q)T.
    \label{eq:prop2_expected_grid_size}
\end{align}
The pointwise bound in Eq.~\eqref{eq:prop2_realized_grid_bound} is uniform in the associated fixed \(y\) and \(z\), and only the distribution of \(L\) is needed below.
Taking conditional expectations in Eq.~\eqref{eq:prop2_realized_grid_bound} and applying Jensen's inequality to the concave square-root function gives
\begin{align*}
    \E_{\mathrm{unif}}[D_{\mathcal G}\mid Q]
    &\leq
    \left(\frac{2\delta}{cT}\right)^{1/2}
    \E_{\mathrm{unif}}
    [(L-1)^{1/2}\mid Q]\\
    &\leq
    \left\{
    \frac{2\delta}{cT}
    \E_{\mathrm{unif}}[L-1\mid Q]
    \right\}^{1/2}\\
    &=
    \left[
    \frac{2\delta}{c}
    \left\{
    \frac{N-1}{T}+\Omega(Q)
    \right\}
    \right]^{1/2},
\end{align*}
where the final equality follows from Eq.~\eqref{eq:prop2_expected_grid_size}.
This is Eq.~\eqref{eq:prop2_expected_mode_stability} and completes the proof.
\end{proof}

\section{Algorithmic Details and Extensions}

\subsection{Sampling State Values at Candidate Times with SIR}
\label{app:sir_candidate_states}
Before applying the NUTS update to \(y\), we use sampling-importance-resampling (SIR) to assign latent values \(\{y_t\}_{t\in\Tcand}\).
Using local indices on the combined time grid containing the newly sampled \(\Tcand\), consider a consecutive block \(y_{a:b}\) of current state values at candidate times with fixed neighboring values \(y_{a-1}\) and \(y_{b+1}\) at adjacent points of \(\Tobs\cup\Ttrue\).
For particle \(p=1,
\ldots,P\), we simulate
\begin{align}
    y_j^{(p)}
    &\sim
    \phat(\cdot\mid y_{j-1}^{(p)},\theta_{z_{j-1}},\Delta_j),
    \qquad j=a,\ldots,b,
\end{align}
with \(y_{a-1}^{(p)}=y_{a-1}\).
Because the proposal is the forward FNLE transition-density approximation from the left endpoint, the SIR weight is
\begin{align}
    \widetilde{w}_p
    &= \phat(
    y_{b+1}\mid y_b^{(p)},
    \theta_{z_b},\Delta_{b+1}
    ),\\
    w_p
    &= \frac{\widetilde{w}_p}{\sum_{q=1}^P\widetilde{w}_q}.
\end{align}
We then draw \(A\sim\Cat(w_1,\ldots,w_P)\) and set \(y_{a:b}=y_{a:b}^{(A)}\).
This construction is applied to every interval between consecutive points of \(\Tobs\cup\Ttrue\) that contains at least one current candidate time.

\subsection{Conjugate Parameter Updates}
\label{app:conjugate_updates}

Let \(\tau_d^2\) denote the \(d\)-th element of \(\tau^2\).
Assume independent priors \(\tau_d^2\sim\InvGamma(\alpha_d^{\tau},\beta_d^{\tau})\) and \(q_{kh}\sim\GammaDist(\alpha^Q,\beta^Q)\) for \(k\neq h\), with the Gamma distribution parameterized by shape and rate.
Here, \(\InvGamma(a,b)\) denotes the inverse-Gamma distribution with shape \(a\) and scale \(b\), whose density is proportional to \(v^{-a-1}\exp(-b/v)\) for \(v>0\).
Let
\begin{align}
    R_d
    =\sum_{i=1}^{N}
    \left(x_{i,d}-y_{t_i^{\mathrm{obs}},d}\right)^2.
\end{align}
The conjugate observation-noise update used in Algorithm~\ref{alg:ssde_sampler} is
\begin{align}
    \tau_d^2\mid x,y
    \sim
    \InvGamma\!\left(
    \alpha_d^{\tau}+\frac{N}{2},
    \beta_d^{\tau}+\frac{R_d}{2}
    \right).
    \label{eq:tau_conditional}
\end{align}
Let \(n_{kh}\) be the number of true jumps from \(k\) to \(h\), and let
\begin{align}
    m_k
    =\int_0^{T}\Ind\{Z_t=k\}\dd t
\end{align}
be the total holding time in regime \(k\).
After marginalizing the virtual jumps, the conjugate update is
\begin{align}
    q_{kh}\mid z
    \sim
    \GammaDist(\alpha^Q+n_{kh},\beta^Q+m_k),
    \qquad k\neq h.
    \label{eq:Q_conditional}
\end{align}
The diagonal entries are set to
\begin{align}
    q_{kk}=-\sum_{h\neq k}q_{kh}.
    \label{eq:Q_diagonal}
\end{align}

\subsection{Extension to Multiple Time Series}
\label{app:multiple_series}

Suppose that \(S\) time series are conditionally independent given the shared parameters \(\Theta,\tau^2,Q\).
Series \(s\) has its own observations, latent state process, regime path, and combined time grid, denoted by \(x^{(s)},Y^{(s)},Z^{(s)}\), and \(\Tall^{(s)}\), respectively.
Let \(N_s\) be its number of observations, \(L_s\) its number of combined grid points, and \([0,T_s]\) its observation interval.
We use \(y_j^{(s)},z_j^{(s)}\), and \(\Delta_j^{(s)}\) for its latent states and regimes at the grid points, and its interval lengths, respectively.

The initialization and the candidate-grid, latent-state, and regime updates in Algorithm~\ref{alg:ssde_sampler} are performed separately for each series.
The shared parameters are then updated using the contributions from all series, with each prior included only once.
The FNLE-based conditional density corresponding to Eq.~\eqref{eq:theta_conditional} becomes
\begin{align}
    \hat p\bigl(\Theta\mid
    \{y^{(s)},z^{(s)},\Tall^{(s)}\}_{s=1}^{S}\bigr)
    &\propto p(\Theta)\prod_{s=1}^{S}\prod_{j=2}^{L_s}
    \phat\bigl(y_j^{(s)}\mid y_{j-1}^{(s)},
    h_{z_{j-1}^{(s)}}(\Theta),\Delta_j^{(s)}\bigr).
\end{align}

For the observation-noise update in Eq.~\eqref{eq:tau_conditional}, replace \(N\) by \(\sum_{s=1}^{S}N_s\) and \(R_d\) by \(\sum_{s=1}^{S}R_d^{(s)}\), where
\begin{align}
    R_d^{(s)}
    =\sum_{i=1}^{N_s}
    \left(x_{i,d}^{(s)}
    -y_{t_i^{\mathrm{obs},(s)},d}^{(s)}\right)^2
\end{align}
is the residual sum of squares for component \(d\) in series \(s\).
For Eq.~\eqref{eq:Q_conditional}, replace \(n_{kh}\) and \(m_k\) by \(\sum_{s=1}^{S}n_{kh}^{(s)}\) and \(\sum_{s=1}^{S}m_k^{(s)}\), where \(n_{kh}^{(s)}\) counts the true jumps from \(k\) to \(h\) in series \(s\), and
\begin{align}
    m_k^{(s)}=\int_0^{T_s}\Ind\{Z_t^{(s)}=k\}\,\dd t
\end{align}
is its holding time in regime \(k\).

\subsection{Computational Complexity}
\label{app:computational_complexity}

For one sweep with \(L\) combined grid points, \(K\) regimes, and \(P\) SIR particles, constructing the emission factors requires \(K(L-1)\) FNLE density evaluations, followed by \(O(LK^2)\) arithmetic operations for FFBS~\citep{rao2013}.
SIR requires \(O(P|\Tcand|)\) conditional FNLE samples and density evaluations across all candidate blocks.
The NUTS cost depends on its trajectory-dependent number of gradient evaluations, each involving \(O(L)\) local FNLE terms, together with observation and prior terms~\citep{hoffman2014}.
These costs exclude offline FNLE training.

\section{Experimental Settings}
\label{app:experimental_settings}

This appendix records the settings for the synthetic-data experiments (OU comparison, LV, gene-expression CLE, and susceptible--infected--recovered) and the real-data LV analysis.

\subsection{Settings for the Main Experiments}

The synthetic-data LV, gene-expression CLE, and susceptible--infected--recovered experiments, along with the real-data LV analysis, use two regimes and observe every state component with Gaussian errors as in Eq.~\eqref{eq:obs_model}.

\subsubsection{FNLE Training}
\label{app:fnle_training}

The time-conditioned FNLE training dataset is
\begin{align}
    \mathcal{D}_{\mathrm{train}}
    =
    \{(y_r^{\mathrm{prev}},y_r^{\mathrm{next}},
    \theta_r,\Delta_r)\}_{r=1}^{R_{\mathrm{train}}},
\end{align}
where \(R_{\mathrm{train}}\) is the number of simulated transitions and \(r\) indexes training examples rather than positions in a trajectory.
The density estimator is trained by maximizing the conditional log-likelihood
\begin{align}
    \max_{\phi}\;
    \sum_{r=1}^{R_{\mathrm{train}}}
    \log\phat\left(
    y_r^{\mathrm{next}}\mid
    y_r^{\mathrm{prev}},\theta_r,\Delta_r
    \right).
    \label{eq:time_conditioned_fnle_objective}
\end{align}
Omitting elapsed-time conditioning gives the standard FNLE training objective.

A separate time-conditioned FNLE model was trained for each experiment.
We used preliminary MCMC draws to check whether the FNLE parameter training ranges in Sections~\ref{app:synthetic_inference_settings} and~\ref{app:real_data_settings} covered the parameter regions relevant to inference.
Where needed, we expanded the ranges, retrained FNLE, and reran MCMC.
These adjustments refine the numerical likelihood approximation without changing the SSDE model or prior.

A natural cubic spline through that experiment's observations was evaluated on an Euler--Maruyama grid~\citep{kloeden2013} with step size \(h=0.01\) to obtain a reference path.
Negative spline values were truncated at zero.
For every training example, each component of \(\log\theta\) was sampled uniformly between the logarithms of the corresponding physical-scale bounds reported below, and \(\theta\) was then obtained by exponentiation.
For each training example, one location was selected from the reference path.
Isotropic Gaussian noise with standard deviation 0.32 was added to this state.
If the perturbed state left \([0,10^4]^D\), the noise was redrawn until the state lay inside this domain.

The maximum elapsed time required for density evaluation is bounded by the largest observation interval, which determines the required training range.
The number of numerical transitions, \(n\), was sampled uniformly from \(\{1,\ldots,n_{\max}\}\), and the elapsed time was \(\Delta=nh\).
The simulator was advanced by \(n\) Euler--Maruyama steps, and FNLE was trained on the scaled increment
\begin{align}
    R
    =\frac{Y_{t+\Delta}-Y_t}{\sqrt{\Delta}}
    \label{eq:scaled_increment}
\end{align}
conditional on \((\theta,Y_t,\Delta)\).
When the learned density of \(R\) is evaluated as a transition density for \(Y_{t+\Delta}\), the change of variables gives
\begin{align}
    &\phat(y^{\mathrm{next}}\mid
    y^{\mathrm{prev}},\theta,\Delta)
    \nonumber\\
    &\quad=
    \Delta^{-D/2}
    \hat p_{\phi,R}\!\left(
    \frac{y^{\mathrm{next}}-y^{\mathrm{prev}}}{\sqrt{\Delta}}
    \middle|
    y^{\mathrm{prev}},\theta,\Delta
    \right).
    \label{eq:scaled_increment_change_of_variables}
\end{align}
Thus, Eq.~\eqref{eq:scaled_increment} is a target reparameterization and does not change the underlying transition model.
For the OU experiment, the two FNLE training sets shared identical conditioning inputs \((y^{\mathrm{prev}},\theta,\Delta)\) and differed only in how \(y^{\mathrm{next}}\) was generated.
For approximate MCMC with FNLE (exact-transition training), it was sampled from the exact transition density in Appendix~\ref{app:synthetic_ou}; for approximate MCMC with FNLE (Euler--Maruyama training), it was generated by Euler--Maruyama.

Every FNLE model was a neural spline flow~\citep{durkan2019} with 50 hidden features, five transforms, and ten spline bins.
Each synthetic-data training set contained 100,000 independently sampled parameter settings, and the real-data training set contained 800,000, with one simulated transition per setting.
Training used a batch size of 256, a learning rate of \(5\times10^{-4}\), at most 100 epochs, and early stopping after 20 epochs without validation-loss improvement.
The density estimator and batch size were selected through preliminary experiments, and maximum epoch counts were set according to the available computational budget.
All other FNLE training hyperparameters used the \texttt{sbi} defaults, including reserving 10\% of the simulated transitions for validation.
The experiment-specific transition-length settings and final epoch counts are given in Table~\ref{tab:nle_transition_lengths}.

\begin{table*}[t]
    \caption{Experiment-Specific FNLE Training Settings.}
    \label{tab:nle_transition_lengths}
    \begin{center}
    \small
    \begin{tabular}{lcc}
        \toprule
        Experiment & \(n_{\max}\) & \(n_{\max}h\) \\
        \midrule
        Synthetic-data OU (approx. MCMC with Euler--Maruyama FNLE) & 100 & 1.00 \\
        Synthetic-data LV & 100 & 1.00 \\
        Synthetic-data gene-expression CLE & 25 & 0.25 \\
        Synthetic-data susceptible--infected--recovered & 20 & 0.20 \\
        Real-data LV & 98 & 0.98 \\
        \bottomrule
    \end{tabular}
    \end{center}
\end{table*}

\subsubsection{Prior Settings}
\label{app:prior_settings}

SDE parameter priors and parameter-sharing structures are given in Sections~\ref{app:synthetic_inference_settings} and~\ref{app:real_data_settings}.
Each prior is evaluated once per unique parameter.
Here, \(\operatorname{LogNormal}(m,s)\) denotes a distribution whose natural logarithm is Gaussian with mean \(m\) and standard deviation \(s\).
The initial regime probabilities were \((1/2,1/2)\), and the initial state had an improper uniform prior, \(p(y_1)\propto 1\) on \(\mathbb R^D\).

The observation-variance priors are \(\tau_d^2\sim\InvGamma(\alpha_d^{\tau},\beta_d^{\tau})\), with the experiment-specific settings listed in Table~\ref{tab:noise_priors}.

\begin{table*}[t]
    \caption{Priors for the Observation Variances}
    \label{tab:noise_priors}
    \begin{center}
    \small
    \begin{tabular}{lll}
        \toprule
        Experiment & Component & Prior \\
        \midrule
        Synthetic-data OU & scalar state & \(\tau^2\sim\InvGamma(10^{-3},10^{-3})\) \\
        Synthetic-data LV  & predator & \(\tau_1^2\sim\InvGamma(10^{-3},10^{-3})\) \\
        Synthetic-data LV  & prey & \(\tau_2^2\sim\InvGamma(10^{-3},10^{-3})\) \\
        Synthetic-data gene-expression CLE  & mRNA & \(\tau_1^2\sim\InvGamma(10^{-3},10^{-3})\) \\
        Synthetic-data gene-expression CLE  & protein fluorescence & \(\tau_2^2\sim\InvGamma(10^{-3},10^{-3})\) \\
        Synthetic-data susceptible--infected--recovered  & susceptible & \(\tau_1^2\sim\InvGamma(10^{-3},10^{-3})\) \\
        Synthetic-data susceptible--infected--recovered  & recovered & \(\tau_2^2\sim\InvGamma(10^{-3},10^{-3})\) \\
        Real-data LV & \textit{Didinium} & \(\tau_1^2\sim\InvGamma(10^{-3},10^{-3})\) \\
        Real-data LV & \textit{Paramecium} & \(\tau_2^2\sim\InvGamma(10^{-3},10^{-3})\) \\
        \bottomrule
    \end{tabular}
    \end{center}
\end{table*}

In all experiments, the off-diagonal CTMC rates independently followed \(\GammaDist(2,20)\) in the shape--rate parameterization.

\subsubsection{MCMC Settings and Initialization}
\label{app:mcmc_settings}

The deterministic uniformization rate used \(c_\Omega=3\) and a numerical safeguard \(\varepsilon=10^{-6}\) in
\begin{align}
    \Omega(Q)
    =\max\left\{
    c_\Omega\max_k(-q_{kk}),
    \max_k(-q_{kk})+\varepsilon,
    \varepsilon
    \right\}.
\end{align}
After computing \(B=I_K+Q/\Omega(Q)\), negative entries caused by numerical error were clipped to zero and each row was renormalized.
Newly inserted candidate-time states were sampled by sampling-importance-resampling with 100 particles.
The continuous path and nonredundant SDE parameters were updated conditionally using Pyro NUTS.
Before the production runs, we used Pyro's step-size adaptation in preliminary runs and fixed the resulting step sizes. Table~\ref{tab:nuts_step_sizes} lists the step sizes for the state-path and parameter updates by experiment and sampler.
The production chains used one NUTS draw per sweep, with no warm-up or further step-size adaptation.
In all experiments, the maximum tree depth was 5 for the state-path update and 3 for the parameter update; these values were selected in preliminary experiments based on the observed divergence rates and sample displacements.
The regime path was updated by log-space forward filtering and backward sampling.

\begin{table*}[t]
    \caption{Fixed NUTS Step Sizes by Experiment and Sampler}
    \label{tab:nuts_step_sizes}
    \begin{center}
    \small
    \begin{tabular*}{\textwidth}{@{\extracolsep{\fill}}p{0.68\textwidth}cc@{}}
        \toprule
        Experiment & State path \(y_{1:L}\) & Parameters \(\Theta\) \\
        \midrule
        Synthetic-data OU (exact MCMC) & \(2^{-8}\) & \(2^{-5}\) \\
        Synthetic-data OU (approx. MCMC with exact FNLE) & \(2^{-8}\) & \(2^{-5}\) \\
        Synthetic-data OU (approx. MCMC with Euler--Maruyama FNLE) & \(2^{-8}\) & \(2^{-5}\) \\
        Synthetic-data LV & \(2^{-9}\) & \(2^{-8}\) \\
        Synthetic-data gene-expression CLE & \(2^{-9}\) & \(2^{-8}\) \\
        Synthetic-data susceptible--infected--recovered & \(2^{-9}\) & \(2^{-8}\) \\
        Real-data LV & \(2^{-9}\) & \(2^{-5}\) \\
        \bottomrule
    \end{tabular*}
    \end{center}
\end{table*}

At initialization, SDE parameters were sampled from their experiment-specific priors conditional on lying inside the corresponding FNLE ranges.
Shared components were sampled once and copied across regimes.
Every off-diagonal CTMC rate and observation variance was sampled from its prior.
The initial regime path was simulated from an auxiliary symmetric CTMC with \(q_{12}=q_{21}=0.2\) for the synthetic-data OU and LV experiments, 0.4 for the synthetic-data gene-expression CLE experiment, and 0.5 for the synthetic-data susceptible--infected--recovered and real-data LV experiments.
This generator was used only to initialize the regime path, independently of the initial \(Q\) drawn from its prior.
The initial continuous path was obtained by linear interpolation of the observations.
FNLE training and the synthetic-data chains used random seed zero; the four real-data chains used seeds zero through three.
Chain lengths are summarized in Table~\ref{tab:chain_lengths}.
For trajectory summaries, each sampled path was linearly interpolated between its grid points onto a common plotting grid before computing pointwise posterior credible intervals.

\begin{table*}[t]
    \caption{MCMC Chain Counts and Lengths Used for Posterior Summaries.
    Sweep, burn-in, and retained-draw counts are per chain.}
    \label{tab:chain_lengths}
    \begin{center}
    \small
    \begin{tabular}{lrrrr}
        \toprule
        Experiment & Chains & Sweeps & Burn-in & Retained \\
        \midrule
        Synthetic-data OU (exact MCMC) & 4 & 10,000 & 5,000 & 5,000 \\
        Synthetic-data OU (approx. MCMC with exact FNLE) & 4 & 10,000 & 5,000 & 5,000 \\
        Synthetic-data OU (approx. MCMC with Euler--Maruyama FNLE) & 4 & 10,000 & 5,000 & 5,000 \\
        Synthetic-data LV & 4 & 10,000 & 5,000 & 5,000 \\
        Synthetic-data gene-expression CLE & 4 & 10,000 & 5,000 & 5,000 \\
        Synthetic-data susceptible--infected--recovered & 4 & 10,000 & 5,000 & 5,000 \\
        Real-data LV & 4 & 10,000 & 5,000 & 5,000 \\
        \bottomrule
    \end{tabular}
    \end{center}
\end{table*}

\subsection{Synthetic-Data Experiments}

\subsubsection{Ornstein--Uhlenbeck Experiment}
\label{app:synthetic_ou}

The OU experiment used Eq.~\eqref{eq:synthetic_ou_model}.
On an interval of length \(\Delta\) with regime \(k\) fixed, define
\(m_k(x,u)=\mu_k+e^{-\kappa u}(x-\mu_k)\) and
\(v(u)=\sigma^2(1-e^{-2\kappa u})/(2\kappa)\).
The exact transition and one-time bridge distributions are
\begin{align}
    p(y\mid x,\theta_k,\Delta)
    &=\Normal\bigl(y;m_k(x,\Delta),v(\Delta)\bigr),
    \\
    Y_u\mid(Y_0=x,Y_\Delta=y,\theta_k)
    &\sim\Normal\left(
        m_k(x,u)
        +e^{-\kappa(\Delta-u)}\frac{v(u)}{v(\Delta)}
        \left[y-m_k(x,\Delta)\right],
        \frac{v(u)v(\Delta-u)}{v(\Delta)}
    \right),\quad 0<u<\Delta.
\end{align}
For multiple interior times, the joint bridge distribution follows from Eq.~\eqref{eq:exact_candidate_bridge} and is multivariate Gaussian.
Exact MCMC uses the transition density directly, while approximate MCMC with FNLE (exact-transition training) uses it only to generate FNLE training transitions and uses the learned density at inference.
The exact bridge distribution is used only by exact MCMC; both approximate samplers use the FNLE-based SIR bridge update.
Conditional on this path, the state was simulated by Euler--Maruyama with step size \(h=0.01\) and independent Brownian increments.

The initial state was \(Y_{0}=0.0\).
The observation standard deviations were \(\tau_1=\tau_2=0.01\).
The observation period was \(T=100\), with 101 observations at spacing 1.0.

\subsubsection{Switching Lotka--Volterra Experiment}
\label{app:synthetic_lv}

The LV experiment used Eq.~\eqref{eq:synthetic_lv_model}.
The regime path was simulated from a two-state CTMC with initial probabilities \((1/2,1/2)\), exact holding times, and exact jump destinations.
Conditional on this path, the state was simulated by Euler--Maruyama with step size \(h=0.01\) and independent Brownian increments.
After each step, the state was restricted to \([0,10^4]^2\) componentwise.

The initial state was \((Y_{0,1},Y_{0,2})=(1.0,0.5)\), and the data-generating CTMC rates were \(q_{12}=q_{21}=0.03\).
The observation standard deviations were \(\tau_1=\tau_2=0.01\).
The observation period was \(T=100\), with 101 observations at spacing 1.0.

\subsubsection{Switching Gene-Expression CLE Experiment}
\label{app:synthetic_cle}

The gene-expression CLE experiment used Eq.~\eqref{eq:synthetic_cle_model}.
The regime path was simulated from a two-state CTMC with initial probabilities \((1/2,1/2)\), exact holding times, and exact jump destinations.
Conditional on this path, the state was simulated by Euler--Maruyama with step size \(h=0.01\) and independent Brownian increments.
After each step, the state was restricted to \([0,10^4]^2\) componentwise.
To see the change of scale, let \(P_t\) be the continuous approximation to the protein copy number, with translation rate \(\gamma/c^2\) per mRNA molecule and degradation rate \(\delta\).
Its chemical Langevin equation is
\begin{align}
    \dd P_t
    &=\left(\frac{\gamma}{c^2}Y_{t,1}-\delta P_t\right)\dd t
    \nonumber\\
    &\quad+\sqrt{\frac{\gamma}{c^2}Y_{t,1}+\delta P_t}\,\dd W_{t,2}.
\end{align}
The transformation \(Y_{t,2}=c^2P_t\), with \(c>0\), gives Eq.~\eqref{eq:synthetic_cle_model}.
Thus, \(\gamma\) is the fluorescence-scaled production coefficient, and \(c^2\) converts protein copy number to fluorescence intensity.

The initial state was \((Y_{0,1},Y_{0,2})=(100,0.5)\), and the data-generating CTMC rates were \(q_{12}=q_{21}=0.05\).
The observation standard deviations were \(\tau_1=1.0\) and \(\tau_2=0.01\).
The observation period was \(T=50\), with 201 observations at spacing 0.25.

\subsubsection{Switching Susceptible--Infected--Recovered Epidemic Experiment}
\label{app:synthetic_sir}

The susceptible--infected--recovered experiment used Eq.~\eqref{eq:synthetic_sir_model}, with fixed total population \(N_{\mathrm{pop}}=1000\).
The regime path had a single prescribed switch at the midpoint of the observation period.
Conditional on this path, the state was simulated by Euler--Maruyama with step size \(h=0.01\) and independent Brownian increments.
After each step, the state was restricted to \([0,10^4]^2\) componentwise and projected onto \(\{(s,r):s\geq0,\ r\geq0,\ s+r\leq N_{\mathrm{pop}}\}\).

The initial state was \((Y_{0,1},Y_{0,2})=(990,0)\), implying \(I_0=10\).
The regime path was fixed to regime 1 on \([0,20)\) and regime 2 on \([20,40]\).
The observation standard deviations were \(\tau_1=\tau_2=0.5\), and negative noisy observations were truncated at zero.
The observation period was \(T=40\), with 201 observations at spacing 0.20.
Both state components were supplied to inference and plotted.
The infected population was obtained as \(I_t=N_{\mathrm{pop}}-Y_{t,1}-Y_{t,2}\).

\subsubsection{Data-Generating Parameters and Inference Settings}
\label{app:synthetic_inference_settings}

Table~\ref{tab:synthetic_truth} lists the data-generating SDE parameters, with regime-specific values ordered by regime; Table~\ref{tab:synthetic_inference_settings} gives the FNLE ranges, parameter-sharing structures, and priors.

\begin{table*}[t]
    \caption{Data-Generating SDE Parameters for the Synthetic Experiments. Regime-specific values are listed for regimes 1 and 2.}
    \label{tab:synthetic_truth}
    \begin{center}
    \small
    \begin{tabular*}{\textwidth}{@{\extracolsep{\fill}}llccc@{}}
        \toprule
        Model & Parameter & Regime 1 & Regime 2 & Structure \\
        \midrule
        \multirow{3}{*}{OU} & \(\kappa\) & 0.15 & 0.15 & shared \\
        & \(\mu\) & -1.2 & 1.2 & regime-specific \\
        & \(\sigma\) & 0.05 & 0.05 & shared \\
        \midrule
        \multirow{6}{*}{LV} & \(\alpha\) & 1.0 & 0.5 & regime-specific \\
        & \(\beta\) & 1.0 & 0.5 & regime-specific \\
        & \(\gamma\) & 1.0 & 0.2 & regime-specific \\
        & \(\delta\) & 1.0 & 0.2 & regime-specific \\
        & \(\sigma_1\) & 0.05 & 0.05 & shared \\
        & \(\sigma_2\) & 0.05 & 0.05 & shared \\
        \midrule
        \multirow{5}{*}{Gene-expression CLE} & \(\rho\) & 100 & 240 & regime-specific \\
        & \(\beta\) & 1.0 & 1.0 & shared \\
        & \(\gamma\) & 0.004 & 0.004 & shared \\
        & \(\delta\) & 0.8 & 0.8 & shared \\
        & \(c\) & 0.03 & 0.03 & shared \\
        \midrule
        \multirow{2}{*}{Susceptible--infected--recovered} & \(\beta\) & 0.40 & 0.80 & regime-specific \\
        & \(\gamma\) & 0.10 & 0.20 & regime-specific \\
        \bottomrule
    \end{tabular*}
    \end{center}
\end{table*}

\begin{table*}[t]
    \caption{SDE Parameter Inference Settings for the Synthetic Experiments.
    Parameter ranges and priors are on the physical scale.}
    \label{tab:synthetic_inference_settings}
    \begin{center}
    \small
    \begin{tabular}{lllll}
        \toprule
        Model & Parameter & Structure & FNLE range & Prior \\
        \midrule
        \multirow{3}{*}{OU} & \(\kappa\) & shared & \([0.05,0.2]\) & \(\operatorname{LogNormal}(0,1)\) \\
        & \(\mu\) & regime-specific & \([-2.0,2.0]\) & \(\operatorname{Normal}(0,1)\) \\
        & \(\sigma\) & shared & \([0.01,0.1]\) & \(\operatorname{LogNormal}(0,1)\) \\
        \midrule
        \multirow{6}{*}{LV} & \(\alpha\) & regime-specific & \([0.01,2]\) & \(\operatorname{LogNormal}(0,1)\) \\
        & \(\beta\) & regime-specific & \([0.1,3]\) & \(\operatorname{LogNormal}(0,1)\) \\
        & \(\gamma\) & regime-specific & \([0.01,2]\) & \(\operatorname{LogNormal}(0,1)\) \\
        & \(\delta\) & regime-specific & \([0.1,3]\) & \(\operatorname{LogNormal}(0,1)\) \\
        & \(\sigma_1\) & shared & \([0.01,0.30]\) & \(\operatorname{LogNormal}(-1,1)\) \\
        & \(\sigma_2\) & shared & \([0.01,0.20]\) & \(\operatorname{LogNormal}(-1,1)\) \\
        \midrule
        \multirow{5}{*}{Gene-expression CLE} & \(\rho\) & regime-specific & \([50,400]\) & \(\operatorname{LogNormal}(4,1)\) \\
        & \(\beta\) & shared & \([0.5,1.5]\) & \(\operatorname{LogNormal}(0,1)\) \\
        & \(\gamma\) & shared & \([0.001,0.010]\) & \(\operatorname{LogNormal}(0,1)\) \\
        & \(\delta\) & shared & \([0.5,1.5]\) & \(\operatorname{LogNormal}(0,1)\) \\
        & \(c\) & shared & \([0.01,0.10]\) & \(\operatorname{LogNormal}(0,1)\) \\
        \midrule
        \multirow{2}{*}{Susceptible--infected--recovered} & \(\beta\) & regime-specific & \([0.05,2]\) & \(\operatorname{LogNormal}(0,1)\) \\
        & \(\gamma\) & regime-specific & \([0.02,2]\) & \(\operatorname{LogNormal}(0,1)\) \\
        \bottomrule
    \end{tabular}
    \end{center}
\end{table*}

For all synthetic-data experiments, regime labels were aligned to the truth by choosing the permutation that minimized the root-mean-square discrepancy between posterior mean log parameters and the data-generating log parameters.

\subsection{Real-Data Experiment}
\label{app:real_data_settings}

The real-data analysis used 71 paired observations of \textit{Didinium} and \textit{Paramecium} abundance from \citet{veilleux1979}.
The observations were sorted by time, the first time was shifted to zero, and both abundance variables were divided by 100 before fitting.
The observation times were irregular and spanned \([0,35.08]\) days.
The state was ordered as \(Y_t=(Y_{t,1},Y_{t,2})^\top\), where \(Y_{t,1}\) and \(Y_{t,2}\) denote \textit{Didinium} and \textit{Paramecium}, respectively.

The analysis used the LV model in Eq.~\eqref{eq:synthetic_lv_model}.
Table~\ref{tab:real_lv_inference} gives the parameter-sharing structure, FNLE ranges, and priors.

\begin{table*}[t]
    \caption{SDE Parameter Settings for the Real-Data LV Analysis.
    Parameter ranges and priors are on the physical scale.}
    \label{tab:real_lv_inference}
    \begin{center}
    \small
    \begin{tabular}{llll}
        \toprule
        Parameter & Structure & FNLE range & Prior \\
        \midrule
        \(\alpha\) & regime-specific & \([0.1,4]\) & \(\operatorname{LogNormal}(0,1)\) \\
        \(\beta\) & regime-specific & \([0.01,6]\) & \(\operatorname{LogNormal}(0,1)\) \\
        \(\gamma\) & regime-specific & \([0.1,8]\) & \(\operatorname{LogNormal}(0,1)\) \\
        \(\delta\) & regime-specific & \([0.01,5]\) & \(\operatorname{LogNormal}(0,1)\) \\
        \(\sigma_1\) & shared & \([0.01,0.8]\) & \(\operatorname{LogNormal}(-1,1)\) \\
        \(\sigma_2\) & shared & \([0.01,2]\) & \(\operatorname{LogNormal}(-1,1)\) \\
        \bottomrule
    \end{tabular}
    \end{center}
\end{table*}

For Figure~\ref{fig:real_data_inference_results}, samples from chains 0--3 were pooled after aligning the regime labels.

\subsubsection{Chain-Specific Posterior Distributions and Traces}
\label{app:real_data_diagnostics}

Figure~\ref{fig:real_data_multichain_trace} compares the parameter distributions and traces from all four real-data chains.

\begin{figure*}[p]
    \centering
    \includegraphics[width=\textwidth,height=0.84\textheight,keepaspectratio]{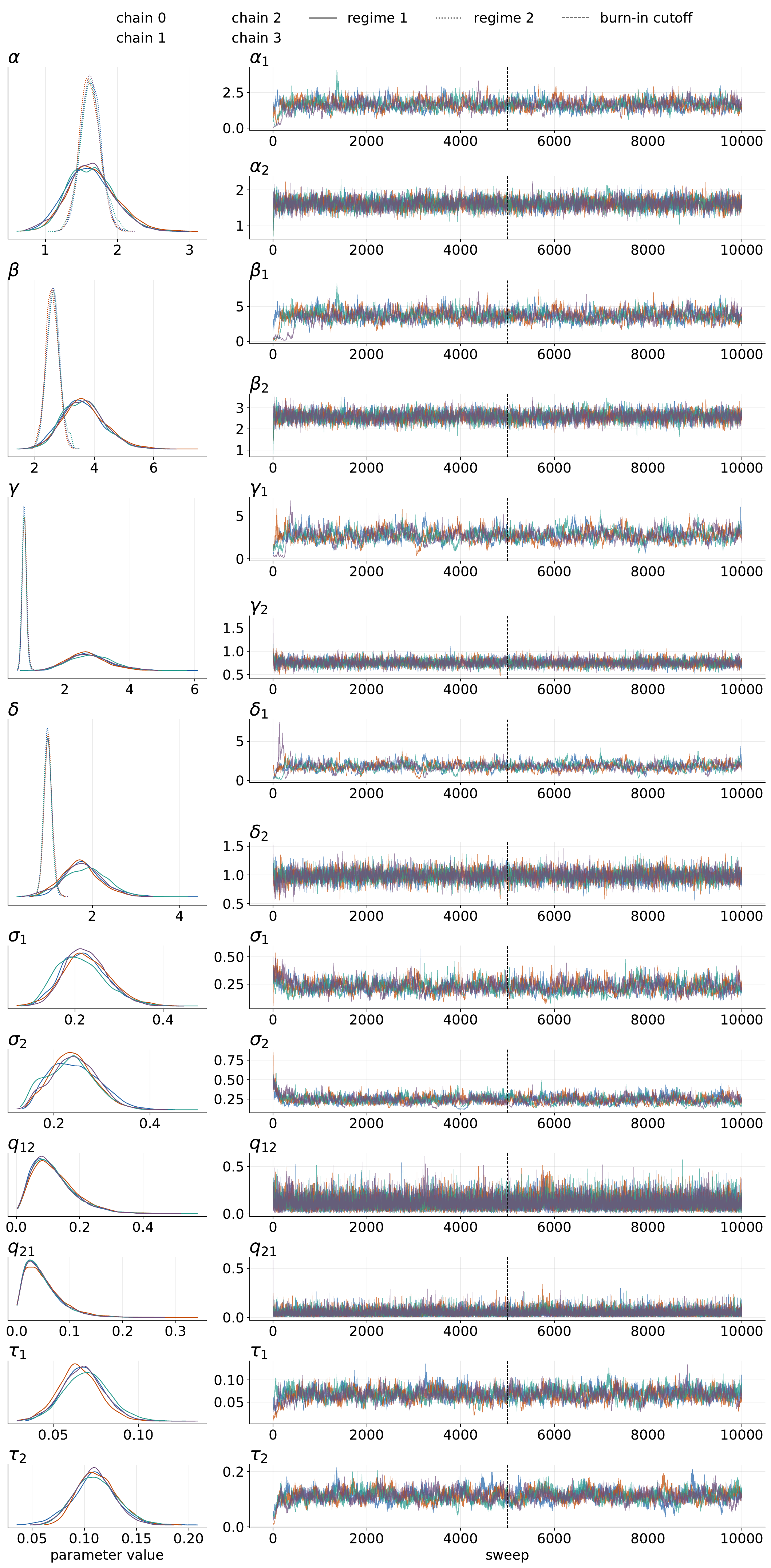}
    \caption{Multi-chain diagnostics for the real-data Lotka--Volterra analysis.
    Left panels show chain-specific marginal posterior density estimates, and right panels show traces over all 10,000 sweeps.
    Colors distinguish the four chains; solid and dotted lines distinguish regimes 1 and 2 for regime-specific parameters.
    Vertical dashed lines mark the burn-in cutoff at 5,000 sweeps.
    The observation-noise panels show standard deviations \(\tau_1,\tau_2\), not variances.}
    \label{fig:real_data_multichain_trace}
\end{figure*}

\clearpage
\subsection{MCMC Diagnostics}
\label{app:mcmc_diagnostics}

\begin{table*}[htbp]
    \caption{Per-parameter MCMC diagnostics for the three OU samplers on synthetic data.}
    \label{tab:ou_diagnostics}
    \begin{center}
    \small
    \begin{tabular}{llccc}
        \toprule
        OU sampler & Parameter & \(\hat R\) & Bulk ESS & Tail ESS \\
        \midrule
        \multirow{7}{*}{Exact MCMC}
        & \(\kappa\) & 1.015 & 198.4 & 1,004.2 \\
        & \(\mu_1\) & 1.004 & 1,187.8 & 6,199.0 \\
        & \(\mu_2\) & 1.005 & 1,478.3 & 5,050.6 \\
        & \(\sigma\) & 1.004 & 1,519.8 & 2,234.7 \\
        & \(q_{12}\) & 1.000 & 16,311.0 & 19,035.3 \\
        & \(q_{21}\) & 1.000 & 16,972.1 & 18,373.4 \\
        & \(\tau\) & 1.007 & 787.0 & 1,386.9 \\
        \midrule
        \multirow{7}{*}{\shortstack{Approx. MCMC with FNLE\\(exact-transition training)}}
        & \(\kappa\) & 1.011 & 323.9 & 2,294.4 \\
        & \(\mu_1\) & 1.002 & 5,909.5 & 13,911.3 \\
        & \(\mu_2\) & 1.006 & 792.1 & 3,312.8 \\
        & \(\sigma\) & 1.004 & 778.2 & 1,338.5 \\
        & \(q_{12}\) & 1.001 & 11,410.3 & 16,523.3 \\
        & \(q_{21}\) & 1.001 & 12,109.2 & 19,161.3 \\
        & \(\tau\) & 1.017 & 405.4 & 1,032.6 \\
        \midrule
        \multirow{7}{*}{\shortstack{Approx. MCMC with FNLE\\(Euler--Maruyama training)}}
        & \(\kappa\) & 1.021 & 229.2 & 1,903.9 \\
        & \(\mu_1\) & 1.006 & 1,497.8 & 4,256.4 \\
        & \(\mu_2\) & 1.004 & 2,159.5 & 7,742.9 \\
        & \(\sigma\) & 1.004 & 953.9 & 1,984.0 \\
        & \(q_{12}\) & 1.000 & 14,125.3 & 16,715.3 \\
        & \(q_{21}\) & 1.000 & 12,237.1 & 19,034.8 \\
        & \(\tau\) & 1.008 & 487.4 & 838.5 \\
        \bottomrule
    \end{tabular}
    \end{center}
\end{table*}

\begin{table*}[htbp]
    \caption{Per-parameter MCMC diagnostics for the synthetic Lotka--Volterra experiment.}
    \label{tab:lv_diagnostics}
    \begin{center}
    \small
    \begin{tabular}{lccc}
        \toprule
        Parameter & \(\hat R\) & Bulk ESS & Tail ESS \\
        \midrule
        \(\alpha_1\) & 1.006 & 421.2 & 1,090.5 \\
        \(\alpha_2\) & 1.007 & 406.2 & 777.6 \\
        \(\beta_1\) & 1.006 & 395.9 & 973.0 \\
        \(\beta_2\) & 1.006 & 385.9 & 771.1 \\
        \(\gamma_1\) & 1.028 & 217.1 & 893.4 \\
        \(\gamma_2\) & 1.038 & 143.4 & 301.5 \\
        \(\delta_1\) & 1.037 & 128.9 & 726.6 \\
        \(\delta_2\) & 1.048 & 107.1 & 318.3 \\
        \(\sigma_1\) & 1.046 & 66.8 & 105.1 \\
        \(\sigma_2\) & 1.056 & 65.5 & 45.1 \\
        \(q_{12}\) & 1.004 & 2,537.5 & 12,637.6 \\
        \(q_{21}\) & 1.006 & 1,251.7 & 5,873.1 \\
        \(\tau_1\) & 1.016 & 316.6 & 1,082.3 \\
        \(\tau_2\) & 1.016 & 400.5 & 742.1 \\
        \bottomrule
    \end{tabular}
    \end{center}
\end{table*}

\begin{table*}[htbp]
    \caption{Per-parameter MCMC diagnostics for the gene-expression CLE experiment.}
    \label{tab:cle_diagnostics}
    \begin{center}
    \small
    \begin{tabular}{lccc}
        \toprule
        Parameter & \(\hat R\) & Bulk ESS & Tail ESS \\
        \midrule
        \(\rho_1\) & 1.003 & 805.9 & 1,595.1 \\
        \(\rho_2\) & 1.001 & 1,274.5 & 3,142.3 \\
        \(\beta\) & 1.011 & 211.0 & 333.4 \\
        \(\gamma\) & 1.007 & 334.2 & 691.6 \\
        \(\delta\) & 1.007 & 335.3 & 681.7 \\
        \(c\) & 1.018 & 201.7 & 358.9 \\
        \(q_{12}\) & 1.000 & 16,712.8 & 18,545.4 \\
        \(q_{21}\) & 1.000 & 18,588.0 & 19,235.4 \\
        \(\tau_1\) & 1.697 & 6.2 & 11.5 \\
        \(\tau_2\) & 1.003 & 725.2 & 2,003.8 \\
        \bottomrule
    \end{tabular}
    \end{center}
\end{table*}

\begin{table*}[htbp]
    \caption{Per-parameter MCMC diagnostics for the susceptible--infected--recovered experiment.}
    \label{tab:sir_diagnostics}
    \begin{center}
    \small
    \begin{tabular}{lccc}
        \toprule
        Parameter & \(\hat R\) & Bulk ESS & Tail ESS \\
        \midrule
        \(\beta_1\) & 1.019 & 481.9 & 1,013.8 \\
        \(\beta_2\) & 1.054 & 61.6 & 338.2 \\
        \(\gamma_1\) & 1.065 & 66.2 & 485.9 \\
        \(\gamma_2\) & 1.003 & 645.6 & 1,278.9 \\
        \(q_{12}\) & 1.000 & 15,315.1 & 18,870.5 \\
        \(q_{21}\) & 1.001 & 10,209.2 & 14,825.5 \\
        \(\tau_1\) & 1.539 & 7.0 & 41.1 \\
        \(\tau_2\) & 1.298 & 10.8 & 38.8 \\
        \bottomrule
    \end{tabular}
    \end{center}
\end{table*}

\begin{table*}[htbp]
    \caption{Per-parameter MCMC diagnostics for the real-data Lotka--Volterra analysis.}
    \label{tab:real_lv_diagnostics}
    \begin{center}
    \small
    \begin{tabular}{lccc}
        \toprule
        Parameter & \(\hat R\) & Bulk ESS & Tail ESS \\
        \midrule
        \(\alpha_1\) & 1.004 & 442.4 & 987.0 \\
        \(\alpha_2\) & 1.005 & 929.8 & 909.5 \\
        \(\beta_1\) & 1.003 & 466.3 & 1,065.3 \\
        \(\beta_2\) & 1.005 & 871.0 & 888.9 \\
        \(\gamma_1\) & 1.011 & 298.2 & 732.9 \\
        \(\gamma_2\) & 1.007 & 1,160.2 & 2,167.1 \\
        \(\delta_1\) & 1.014 & 296.4 & 493.5 \\
        \(\delta_2\) & 1.004 & 1,563.6 & 3,540.1 \\
        \(\sigma_1\) & 1.027 & 342.6 & 478.5 \\
        \(\sigma_2\) & 1.010 & 276.9 & 552.2 \\
        \(q_{12}\) & 1.002 & 5,023.5 & 9,473.7 \\
        \(q_{21}\) & 1.002 & 3,481.9 & 7,393.4 \\
        \(\tau_1\) & 1.014 & 397.2 & 683.3 \\
        \(\tau_2\) & 1.010 & 183.5 & 344.5 \\
        \bottomrule
    \end{tabular}
    \end{center}
\end{table*}

\clearpage

\subsection{Computing Environment and Software}

\subsubsection{Computing Infrastructure}
\label{app:computing_infrastructure}

All computations used only the CPU on a local PC with an Intel Core i7-1195G7 processor (4 cores, 8 threads) and 32GiB of RAM, running Ubuntu 22.04.5 LTS.

\subsubsection{Software and Existing Assets}
\label{app:software_assets}
We train the local transition densities for FNLE using the NLE implementation in \texttt{sbi} (v0.24.0)~\citep{boelts2025}.
We additionally use the \texttt{Pyro} package (v1.9.1) \citep{bingham2019} for posterior sampling with NUTS.
We compute \(\hat R\) and bulk and tail ESS using \texttt{ArviZ} (v0.22.0)~\citep{kumar2019}.
All three packages are distributed under the Apache License 2.0.
Complete software dependencies and their exact versions are provided in the supplementary material via the accompanying \texttt{uv.lock} file.
The real data used in this study are publicly available in the \texttt{rEDM} R package (BSD 2-Clause license), originally from \citet{veilleux1979}.

\end{document}